\newtheorem{theorem}{Theorem}[section]
\newtheorem{proposition}[theorem]{Proposition}
\newtheorem{lemma}[theorem]{Lemma}
\icmltitlerunning{Breaking Inter-Layer Co-Adaptation by Classifier Anonymization}
\newcommand{\etal}{\textit{et al}. }
\newcommand{\ie}{\textit{i}.\textit{e}.}
\newcommand{\argmin}{\mathop{\rm arg~min}\limits}
\begin{document}

\twocolumn[
\icmltitle{Breaking Inter-Layer Co-Adaptation by Classifier Anonymization}

% It is OKAY to include author information, even for blind
% submissions: the style file will automatically remove it for you
% unless you've provided the [accepted] option to the icml2019
% package.

% List of affiliations: The first argument should be a (short)
% identifier you will use later to specify author affiliations
% Academic affiliations should list Department, University, City, Region, Country
% Industry affiliations should list Company, City, Region, Country

% You can specify symbols, otherwise they are numbered in order.
% Ideally, you should not use this facility. Affiliations will be numbered
% in order of appearance and this is the preferred way.
% \icmlsetsymbol{equal}{*}

\begin{icmlauthorlist}
\icmlauthor{Ikuro Sato}{itlab}
\icmlauthor{Kohta Ishikawa}{itlab}
\icmlauthor{Guoqing Liu}{itlab}
\icmlauthor{Masayuki Tanaka}{AIST}
\end{icmlauthorlist}

\icmlaffiliation{itlab}{Denso IT Laboratory, Inc., Japan}
\icmlaffiliation{AIST}{National Institute of Advanced Industrial Science and Technology, Japan}

\icmlcorrespondingauthor{Ikuro Sato}{isato@d-itlab.co.jp}
%\icmlcorrespondingauthor{Masayuki Tanaka}{masayuki.tanaka@aist.go.jp}

% You may provide any keywords that you
% find helpful for describing your paper; these are used to populate
% the "keywords" metadata in the PDF but will not be shown in the document
\icmlkeywords{Machine Learning, ICML, Deep Neural Network}

\vskip 0.3in
]

% this must go after the closing bracket ] following \twocolumn[ ...

% This command actually creates the footnote in the first column
% listing the affiliations and the copyright notice.
% The command takes one argument, which is text to display at the start of the footnote.
% The \icmlEqualContribution command is standard text for equal contribution.
% Remove it (just {}) if you do not need this facility.

\printAffiliationsAndNotice{}  % leave blank if no need to mention equal contribution
%\printAffiliationsAndNotice{\icmlEqualContribution} % otherwise use the standard text.

\begin{abstract}
This study addresses an issue of co-adaptation between a feature extractor and a classifier in a neural network. A na\"ive joint optimization of a feature extractor and a classifier often brings situations in which an excessively complex feature distribution adapted to a very specific classifier degrades the test performance. We introduce a method called Feature-extractor Optimization through Classifier Anonymization (FOCA), which is designed to avoid an explicit co-adaptation between a feature extractor and a particular classifier by using many randomly-generated, weak classifiers during optimization. We put forth a mathematical proposition that states the FOCA features form a point-like distribution within the same class in a class-separable fashion under special conditions. Real-data experiments under more general conditions provide supportive evidences.
%
%One such example is that
%a classifier can generalize well even when it learns the FOCA features 
%of a partial dataset consisting of only 
%one data sample per class.
%
%An approximated geodesic distance between
%the classifier parameters trained by the full dataset
%and those by an orders-of-magnitude smaller partial dataset
%is orders-of-magnitude smaller than other methods.
%that employ joint optimizations 
%between the feature-extractor and the classifier parameters.
\end{abstract}

%%%%%%%%%%%%%%%%%%%%%%%%%%%%%%%%%%%%%%%%%%%%%%%%%%%%%%%%%%%%%%%%%%%%%%%%%%%%%%%%
\section{Introduction}
\label{one}

\begin{comment}
A typical neural network designed for 
an image classification task consists of
a local feature extractor consisting of convolutional layers often accompanied with spatial pooling layers
and a classifier consisting of fully-connected layers~\cite{LeCun1989}.
Much efforts have been paid for designing rich architectures of feature extractor;
for instance, skip-connections~\cite{He2016,Li2018}, network-in-network~\cite{Lin2013network}, 
restricted connectivity between layers~\cite{Szegedy2015} to name a few, are 
the standard techniques now.
As far as a feature extractor that has a powerful description ability is used,
a classifier of a relatively simple form works effectively~\cite{Hoffer2018,Lin2013network} 
in most cases.
\end{comment}

\begin{figure}[t]
    \centering
    {\small
    \begin{tabular}{cc}
    \hspace{-1.0mm}\includegraphics[height=31mm]{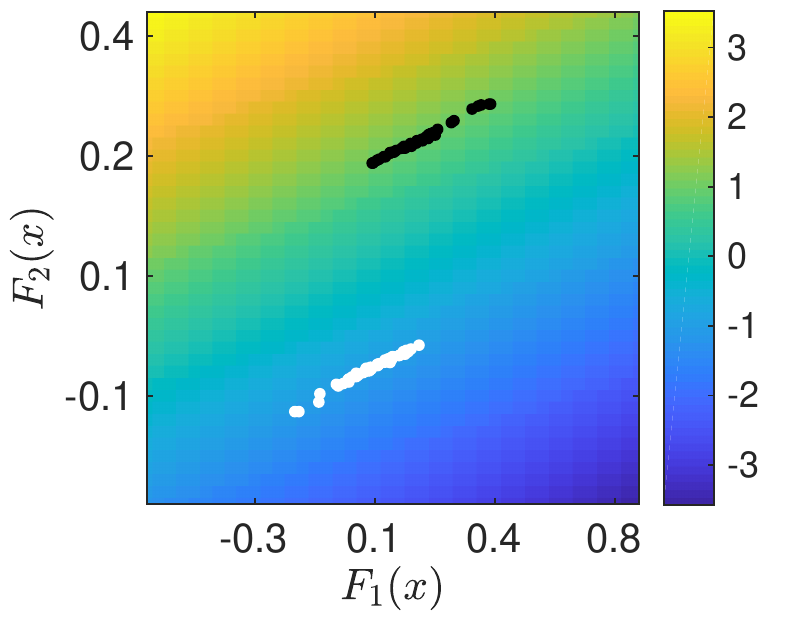} & 
    \hspace{-6.0mm}\includegraphics[height=31mm]{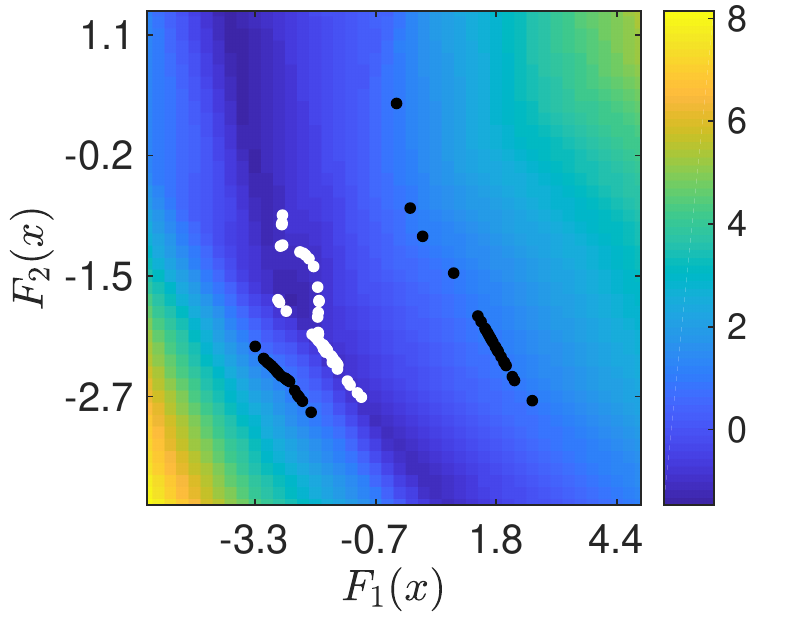} \\
    \multicolumn{2}{c}{(a) Joint opt. Single- (left) and multi-layered classifiers (right).} \\
    \hspace{-1.0mm}\includegraphics[height=31mm]{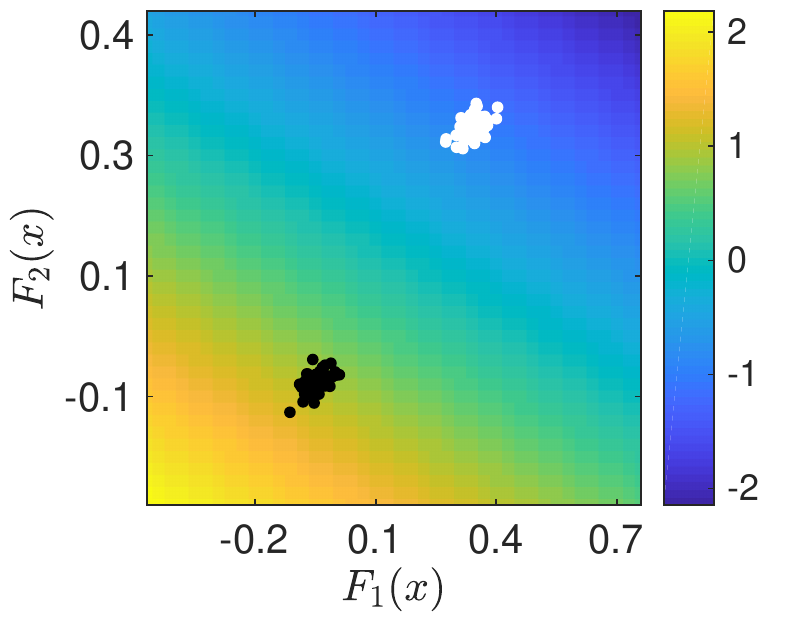} & 
    \hspace{-6.0mm}\includegraphics[height=31mm]{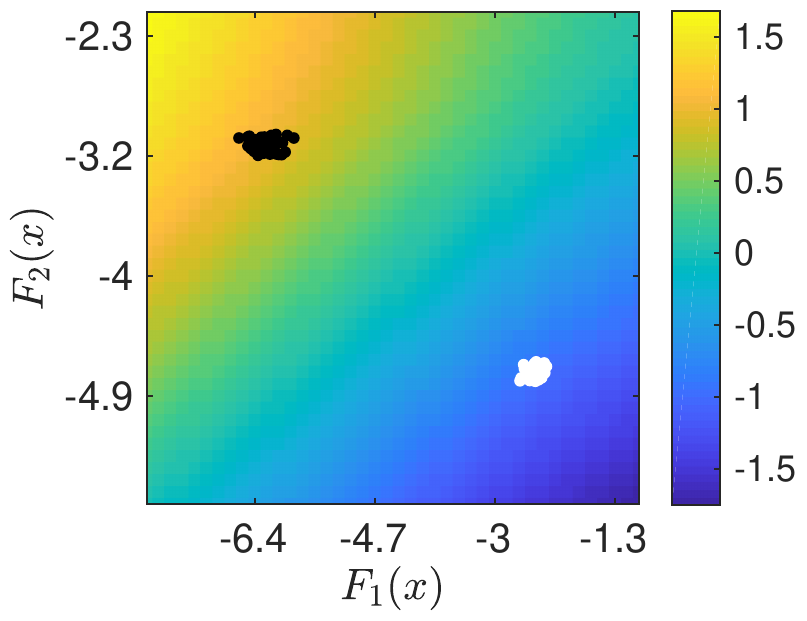} \\
    \multicolumn{2}{c}{(b) FOCA. Single- (left) and multi-layered classifiers (right).}
    \end{tabular}
    }
    \caption{Visualization of typical 2D features of two-class training data. 
    %whose input variables are generated from an 128D uniform distribution, 
    (a) A na\"ive joint optimization of a feature extractor and
    a classifier; 
    (b) FOCA (ours).
    Features in (b) form nearly point-like distributions per class,
    whereas those in (a) form more complex distributions.
    An L2 loss is minimized in each case.
    Black (white) dots indicate $+(-)1$-class data points, 
    and the colored maps indicate the classifiers' outputs,
    where in (b) ``averaged" outputs of 256 weak classifiers are shown.
    %The architecture of feature extractor is 128-128-128-128-2, and
    %that of the single (multi)-layered classifier is 2-1 (2-16-16-1).
    %Single (multi)-layered experiments use the sigmoid (leaky ReLU) activations.
    % MLP FOCA use L2-regularized weak classifiers
    }
    \label{fig_toy}
\end{figure}
% % plain, mlp => Toy20/ICML2019_plain_L2_mlp/fig21a.fig
% % FOCA, mlp => Toy20/ICML2019_FOCA_L2_mlp/fig21.fig
% % plain, analytical => Toy17/ICML2019_FOCA_analytical/fig21c.fig
% % FOCA, analytical => Toy20/ICML2019_plain_analytical/fig21b.fig

When specific signal patterns are repeatedly delivered 
by hidden neurons in a neural network
during training, 
the network parameters are updated in a strongly tied way,
or \textit{co-adapted}, so that the network becomes
vulnerable against small input perturbations
{\small\cite{Hinton2012,Srivastava2014}}.
To discourage co-adaptation,
Hinton~\etal proposed a method called Dropout that
randomly deactivates neurons during training.
Properties of Dropout training have been intensively 
studied {\small\cite{Helmbold2015OnTI, Baldi2013, Gal2016, Wager2013, Ren2016, WardeFarley2013, Bengio2013}};
whereas there is a critique saying
it does not necessarily yield 
co-adaptation prevention ability {\small\cite{Helmbold2018}}.

Yosinski~\etal studied the degrees of 
inter-layer co-adaptation 
by examining test performance that
mid-layer features can yield
{\small\cite{Yosinski2014}}.
In part of their experiments, 
they split an end-to-end trained network into two blocks of layers,
initialized the second-block parameters with random numbers, and
trained the second block from scratch with the first-block parameters held fixed.
They found that there are often cases where
the secondary optimization degrades the test performance
compared to the preceding primary joint optimization,
despite that Dropout is adopted.
In these cases,
inter-layer co-adaptation 
(or \textit{fragile co-adaptation}, in their words)
happens between two blocks.
Potentially, there is a chance that the secondary optimization finds 
the same minimum achieved by the primary optimization;
however, in reality the chance rate is usually not high.
%especially when one splits the block indifferently.
Excessively complex feature distribution, 
like the ones shown in Fig.~\ref{fig_toy}~(a),
would be a major factor that induces inter-layer co-adaptation.
Yosinski~\etal also showed that inter-layer co-adaptation tends to cause negative effects 
in the cross-domain transfer.
%When two successive layers are in co-adaptation,
%the intermediate feature vector performs poorly both in-domain and cross-domain.
%One of the concerns about a joint optimization of feature extractor and
%classifier is that the feature extractor co-adapts to the classifier 
%in a way that the feature-extraction ability degrades.

Is there a way to fundamentally
avoid inter-layer co-adaptation?
Based on a thought that na\"ive joint optimization of a feature extractor
and a classifier would result in unwanted co-adaptation
between them,
we seek a more fundamental approach to break the adhesion,
rather than
searching best-performing feature layers empirically {\small\cite{Yosinski2014,Kobayashi2017}}.
The questions we try to answer in this work are:
a) \textit{Is it possible to train a feature extractor 
without inter-layer co-adaptation to a particular classifier?}
b) \textit{After such training, what kind of characteristics 
along with robustness against unwanted inter-layer co-adaptation
does the feature extractor acquire?}

Regarding the first problem,
we introduce a particular feature-extractor optimization method 
called
Feature-extractor Optimization through Classifier 
Anonymization (FOCA) in Section~\ref{two}.
FOCA is designed so that the feature extractor 
does not explicitly co-adapt to a particular classifier.
Instead, it uses randomly generated, weak classifiers
during the feature-extractor training.
FOCA belongs to a family of network randomization methods
{\small\cite{Srivastava2014, dropconnect, stochasticpooling, stochasticdepth, swapout}},
but is different from others in terms that
FOCA does not employ a joint optimization of
a feature extractor and a classifier.
The classifier part is \textit{anonymized}
by marginalizing independently generated, weak classifiers;
in this way explicit co-adaptation to a particular classifier is avoided.
%In a joint optimization case,
%a feature extractor is updated so as to reduce the empirical loss
%for a given discriminative function defined by a classifier 
%at that stage.
%We replace the single classifier with many weak classifiers
%generated independently to \textit{anonymize} the classification %part.
%Thus, the feature extractor is connected to different classifier(s) %in every update;
%in this way explicit co-adaptation to a particular classifier is %avoided.
%In the practical algorithm, at each repetition,
%a (set of) weak classifier(s) is generated, 
%then the feature extractor is updated so as to make
%the weak classifier(s) slightly stronger.
%Intuitively speaking, repeating this process likely ends up with
%a large margin between features belonging to different classes.

Regarding the second problem,
we obtained intriguing 
mathematical proposition (Section~\ref{three}) and 
experimental evidences 
about simplicity of FOCA feature distributions.
Let us suppose class-$c$ features form a point-like distribution
in a class-separable fashion.
In that case 
a strong classifier for a partial dataset must be
also strong for the entire dataset.
This characteristics is largely confirmed for the FOCA features 
(Section~\ref{four-one}).
The distance between large-dataset solution and small-dataset solution
in the classifier parameter space 
is indeed very small when FOCA is adopted
(Section~\ref{four-two}).
Low-dimensional analyses of the FOCA features exhibit 
nearly point-like distributions
(Section~\ref{four-three}).

\begin{comment}
This study focuses on in-domain optimizations, by which 
we mean the primary and secondary optimizations use the same dataset.
%Cross-domain transfer learning is outside the scope.
\end{comment}

%%%%%%%%%%%%%%%%%%%%%%%%%%%%%%%%%%%%%%%%%%%%%%%%%%%%%%%%%%%%%%%%%%%%%%%%%%%%%%%%
\section{Optimization Method to Break Inter-Layer Co-Adaptation}
\label{two}

In this section, 
we introduce FOCA that aims at training a feature extractor
without inter-layer co-adaptation to a particular classifier.
We first go over the basic joint optimization method, then
introduce FOCA.

%%%%%%%%%%%%%%%%%%%%%%%%%%%%%%%%%%%%%%%%%%%%%%%%%%%%%%%%%%%%%%%%%%%%%%%%%%%%%%%%
\subsection{Joint Optimization: a Review}
\label{two-one}

Let $(x, t)$ be a pair of a $d_I$-dimensional input data and 
the corresponding $d_O$-dimensional target data, respectively.
The training dataset $\mathcal{D}$ contains $n_\mathcal{D}$ such pairs.
Feature extractor 
$F_\phi:\mathbb{R}^{d_I}\rightarrow\mathbb{R}^{d_F}$
transforms an input to a $d_F$-dimensional feature with parameter set $\phi$, and
classifier $C_\theta:\mathbb{R}^{d_F}\rightarrow\mathbb{R}^{d_O}$
transforms a feature to a $d_O$-dimensional output vector with parameter set $\theta$.
A joint optimization problem is given as
\begin{equation}
    \left(\phi^\star, \theta^\star \right) = \argmin_{\phi, \theta} \frac{1}{n_\mathcal{D}}
    \sum_{(x,t)\in\mathcal{D}} L\left(C_\theta(F_\phi(x)), t\right),
    \label{eq_joint}
\end{equation}
where $L(\cdot,t):\mathbb{R}^{d_O}\rightarrow\mathbb{R}$ defines 
the sample-wise loss between
the network output and the target.

%An orthodox SGD update rule is 
%\begin{align}
%    \omega_j &\leftarrow \omega_j + \delta\omega_j, \nonumber \\
%    \delta\omega_j &\propto - \sum_{(x,t)\in\mathcal{B}} \nonumber
%    \partial L / \partial\omega_j,
%\end{align}
%where $\omega$ is either $\phi$ or $\theta$, 
%and $\mathcal{B}$ is a fixed-sized, random subset of the dataset $\mathcal{D}$.
When SGD training is na\"ively applied,
at each step the classifier is updated so as to
become more discriminative for the presented features,
no matter how complex the feature distribution is.
The feature extractor, on the other hand, is updated 
so that the classifier at that moment becomes stronger, 
no matter how complex the decision boundary is.
The toy example in Fig.~\ref{fig_toy}~(a) demonstrates such a case, where
training results in excessively complex feature distribution.

%%%%%%%%%%%%%%%%%%%%%%%%%%%%%%%%%%%%%%%%%%%%%%%%%%%%%%%%%%%%%%%%%%%%%%%%%%%%%%%%
\subsection{Feature-extractor Optimization through Classifier Anonymization (FOCA)}
\label{two-two}

Below, we introduce FOCA
for optimizing a feature extractor 
without explicitly co-adapting to a particular classifier.
The optimization problem is defined as
\begin{equation}
    \phi^\star = \argmin_\phi \frac{1}{n_\mathcal{D}} \sum_{(x,t)\in \mathcal{D}}
    % \sum_{\theta\in\Theta_\phi} 
    \mathbb{E}_{\theta\sim\Theta_\phi} 
    L \left(C_\theta(F_\phi(x)), t \right),
    \label{eq_phistar}
\end{equation}
where $\Theta_\phi$ represents a predefined distribution function of 
\textit{weak classifiers} for a given parameter set $\phi$, and
$\mathbb{E}_{\theta\sim\Theta_\phi}$ represents the expectation value
over $\theta\sim\Theta_\phi$.
The feature extractor is optimized with respect to a 
\textit{set} of weak classifiers that are independently sampled from $\Theta_\phi$ and thus
is not able to co-adapt to a particular classifier,
as long as $\Theta_\phi$ generates distinct weak classifiers.

The weakness of the discriminative power of 
$\theta\sim\Theta_\phi$
is essential in this formulation.
If $\theta\sim\Theta_\phi$ is designed to be too strong for $\mathcal{D}$,
its decision boundary likely becomes fairly complex 
during the training, and
the feature extractor would update itself to better fit the
complex decision boundary,
resulting in a vicious cycle.
On the other hand, if $\theta\sim\Theta_\phi$ is too weak or even adversarial,
the optimization process would not converge.

The marginalization over weak classifiers likely prevents the feature distribution
from becoming excessively complex.
Even at the end of the optimization,
there is generally a large number of distinct weak classifiers, and
the feature extractor is optimized with respect to the \textit{ensemble} of these weak classifiers.
Although some of the weak classifiers may have excessively complex
decision boundaries, marginalization over the classifier ensemble 
likely smoothens those out.
This likely yields a relatively simple decision boundary 
and reasonably strong classification power,
as is essential in the classical Classifier Bagging algorithms {\small\cite{Breiman1996}} and 
other ensemble learning algorithms
{\small\cite{Hara2017, Feng2016}}.
Therefore,
the form of the feature distribution likely becomes simple
as far as the feature-extractor's description ability allows.

There is some room in defining $\Theta_\phi$, and
here we introduce a particular definition.
Let 
\begin{equation}
    \Theta_\phi = \mathcal{U}(\{\theta_{\phi, b} ; b = b_1, b_2, \cdots\}), 
    \label{eq_Theta}
\end{equation}
where $\mathcal{U}(s)$ is a discrete uniform distribution function 
for all elements in set $s$, and $\theta_{\phi, b}$ is a solution that minimizes
a batch-wise loss function with a norm regularization,
\begin{equation}
    \theta_{\phi, b} = \argmin_\theta \frac{1}{n_b} \sum_{(x,t)\in b} 
    L \left(C_\theta(F_\phi(x)), t \right) + \lambda \| \theta \|_2^2.
    \label{eq_theta}
\end{equation}
Here, batch $b$ comprises $n_b$ training samples that cover all classes,
and $\lambda > 0$.
We further assume that the classifier parameters
are initialized with random numbers prior to optimizing;
therefore, there is almost no chance of having continuity 
between $\theta_{\phi, b}$ and $\theta_{\phi+\delta\phi, b}$ 
for $\|\delta\phi\|\ll 1$.
%no matter how small the scale of $\delta\phi$ is.

A solution $\theta_{\phi,b}$, a strong classifier for the batch $b$, 
is not generally strong for the entire dataset $\mathcal{D}$
for given $\phi$
because it does not ``see" training samples other than the ones in $b$.
However, there is no guarantee that
$\theta_{\phi,b}$ is always a weak classifier to $\mathcal{D}$
in a classical sense; that is, 
a weak classifier performs only slightly better than random guesses.
Indeed, $\theta_{\phi,b}$ can even work adversarially to $\mathcal{D}$, 
meaning its accuracy is below the chance rate.
%However, when one takes an ensemble of plural classifiers generated by %Eq.~(\ref{eq_theta})
%with different $b$'s, it likely 
%acquires a reasonably strong classification ability.
But, for brevity, we simply call $\theta_{\phi, b}$ 
a ``weak classifier" in this work.

%The batch size $n_b$ is an important hyper-parameter as this controls the weakness.
%If $n_b$ is too large, $\theta_{\phi,b}$ likely becomes too strong for $\mathcal{D}$.
%If $n_b$ is too small, $\theta_{\phi,b}$ may become too weak or even adversarial for $\mathcal{D}$.

%\textcolor{red}{
%Regarding the number of weak classifiers used in a single update of $\phi$,
%it is not possible to prepare
%a complete set of possible weak classifiers because of
%the huge number of distinct batches of the same size.
%We must adopt an approximation method instead, such as 
%generating a single weak classifier
%per feature-extractor update.
%This works from our experience.}

The norm regularization term in Eq.~(\ref{eq_theta}) helps to avoid blowups 
during the feature-extractor training.
The scale of $\theta_{\phi,b}$ can be very large without the regularizer 
when two feature vectors in $b$ stand close to each other.
In such a case instability likely occurs.

%\textcolor{red}{
%It is worth mentioning that there is 
%another way of generating a reasonably weak classifier:
%to take a batch (which can be large) and then to optimize the batch-wise loss 
%in an incomplete fashion by stopping after a relatively small number of iterations, 
%say 20 times.
%This works fine, though
%the definition of $\theta_{\phi,b}$ becomes mathematically less clear.}

After a feature extractor is obtained by Eq.~(\ref{eq_phistar}),
the following secondary optimization using the entire dataset
provides a final, single classifier.
\begin{equation}
    \theta^\star = \argmin_\theta \frac{1}{n_\mathcal{D}}
    \sum_{(x,t)\in\mathcal{D}} 
    L \left(C_\theta(F_{\phi^\star}(x)), t \right).
    \label{eq_thetastar}
\end{equation}
Here, the classifier is trained with \textit{fixed} features.
Note that the classifier architecture
in this secondary optimization can differ from the one used in the primary 
optimization.% of the feature extractor.

Our method and meta-learning  
{\small\cite{Finn2017}} share a following similarity, 
despite that the goals are different 
(co-adaptation prevention vs. transferable multi-task learing).
Our feature extractor acts like task-generic base network, and
our classifiers act like taskwise fine-tuned models.

\textbf{Approximate minimization.}
Regarding the number of weak classifiers used in a single update of $\phi$,
it is impossible to prepare
a complete set of possible weak classifiers due to
the huge number of distinct batches of the same size.
We must adopt approximation instead.
Algorithm~\ref{alg1}~\footnote{In the pseudocode,
$\mathrm{randi}(i,j)$ 
returns a $j$-dimensional vector with each element being 
a random variable $\sim\mathcal{U}(\{1,2,\cdots,i \})$.}
gives an approximate solution of $\phi^\star$ 
%in Eq.~(\ref{eq_phistar})
%This algorithm provides an update rule for $\phi$,
%where a weak classifier is generated per $\phi$-update using a 
%randomly chosen batch of size $C$, which is the number of classes.
%This update rule yields only an approximate solution 
in two senses:
1) a single weak classifier is sampled from $\Theta_\phi$ per $\phi$-update
instead of taking a marginalization over $\Theta_\phi$, and
2) $\Theta_\phi$ is held fixed in the computation of gradients with respect to $\phi$.
%This strategy works in practice from our experience as we see in Section~\ref{three}.

\begin{algorithm}[h]
\caption{Approximate minimization in Eq.~(\ref{eq_phistar})}
\label{alg1}
\algblock{Begin}{End}
\begin{algorithmic}[1]
{\small
    \Require{total number of iterations $T$;
        number of classes $C$;
        number of class-$c$ samples $n_c {\footnotesize(c=1,\cdots,C)}$;
        number of samples per class for $\theta$-update $k$;
        total number of samples $n_\mathcal{D}$;
        minibatch size for $\phi$-update $m$;
        learning rate $\eta$}
    \Begin
    \State Initialize $\phi$ by random variables.
    \For{$t=1:T$}%\Comment{We have the answer if r is 0}
        \State $I_c = [ \mathrm{randi}(n_1,k), \cdots, \mathrm{randi}(n_C,k)]$
        %\Comment{Picks $k$ samples from each class.}
        \State $\theta = \argmin_{\theta'} \sum_{i\in I_c} L \left(C_{\theta'}(F_{\phi}(x_i)), t_i \right)     +\lambda \Vert \theta' \Vert_2^2$
        \State $I_f = \mathrm{randi}(n_\mathcal{D}, m)$
        %\Comment{Picks $m$ samples from the dataset.}
        \State $\phi \leftarrow \phi - \frac{\eta}{m} \sum_{i\in I_f} \partial     L\left(C_{\theta}(F_{\phi}(x_i)), t_i \right) / \partial \phi$
    \EndFor
    \End
    \Ensure{feature-extractor parameters $\phi^\star=\phi$}
}
\end{algorithmic}
\end{algorithm}

It is worth mentioning that there is 
another way of generating a reasonably weak classifier:
to take a batch (which can be large) and then to optimize the batch-wise loss 
in an incomplete fashion by stopping after a relatively small number of iterations, 
say 20 times.
This works fine, though
the definition of $\theta_{\phi,b}$ becomes mathematically less clear.

\section{Mathematical Property}
\label{three}

We now show a proposition about the simplicity of FOCA feature distributions.
It will be proven that under some special conditions
any two samples have exactly the same features when target classes are the same, 
but have different features when target classes are different.
Let us first introduce a lemma about implicit optimality for individual features,
and then put forth the proposition.

\begin{lemma}
    Suppose that a multi-layered feature extractor with two restrictions is used:
    
    1) The activation function $a$ satisfies 
    \begin{equation}
        a: \mathbb{R} \rightarrow \mathbb{R}^+, ~~\frac{\partial a(z)}{\partial z} \ne 0.
        \label{eq_activation}
    \end{equation}
    2) The last layer is fully-connected.
    
    If $\phi^\star$ simultaneously minimizes sample-wise losses
    $L(C_\theta(F_\phi(x)), t)$ for all $(x,t)\in\mathcal{D}$,
    then,
    \begin{equation}
        \frac{\partial {C_\theta}}{\partial F_{\phi^\star}}
        \frac{\partial L(C_\theta(F_{\phi^\star}(x)),t)}{\partial {C_\theta}}
        = 0, ~\forall (x,t) \in \mathcal{D}.
        \label{eq_feature_optimality}
    \end{equation}
% Ishikawa: Above equation should be interpreted as below. Expressions are little bit ambiguous (someone can see it something like functional derivatives for example). Adding a footnote or something for this explanation seems informative. (Especially, one can understand the parenthesized comment below clearly with it.)
%    \begin{equation}
%        \frac{\partial C_\theta(f)}{\partial f} \bigg|_{f=F_{\phi^*}(x)} \frac{\partial %L(c, t)}{\partial c} \bigg|_{c=C_\theta (F_{\phi^{*}} (x))} = 0, \forall (x,t) %\in \mathcal{D}.
%    \end{equation}
    ($\frac{\partial {C_\theta}}{\partial F_{\phi^\star}}$
    is a short-hand notation for
    $\left.\frac{\partial C_\theta(f)}{\partial f} \right|_{f=F_{\phi^*}(x)}$.
    A summation symbol over $C_\theta$ indices is ignored in 
    Eq.~(\ref{eq_feature_optimality}).)
    \label{lemma}
\end{lemma}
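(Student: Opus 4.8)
The plan is to turn Eq.~(\ref{eq_feature_optimality}) into a first-order stationarity condition for $\phi^\star$ with respect to the parameters $\phi^\ell$ of the \emph{last} layer of the feature extractor, and then to cancel the obstructing chain-rule factor $\partial F_\phi/\partial\phi^\ell$ by exploiting the two restrictions on that layer. This is what makes rigorous the informal ``dropping'' of $\partial F_\phi/\partial\phi^\ell$ that is tempting in such chain-rule arguments.

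First I would fix notation: write the penultimate-layer output as $h(x)\in\mathbb{R}^{d_{\ell-1}}$ and the last, fully-connected layer as $F_\phi(x)_i = a\big(w_i^\top h(x)+b_i\big)$ for $i=1,\dots,d_F$, where $w_i$ and $b_i$ are the weight vector and bias of feature unit $i$; all of $h$, $w_i$, $b_i$ are functions of $\phi$, and $h(x)$ is strictly positive componentwise because it is itself an output of the activation $a$ (restriction~1). By hypothesis the single parameter vector $\phi^\star$ minimizes $L(C_\theta(F_\phi(x)),t)$ over $\phi$ for every $(x,t)\in\mathcal{D}$ at once, so --- assuming the usual smoothness and that $\phi^\star$ is an interior point --- $\phi^\star$ is a stationary point of each of these functions; in particular $\partial L/\partial w_i = 0$ and $\partial L/\partial b_i = 0$ at $\phi=\phi^\star$, for every $i$ and every sample.

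Next I would expand those stationarity conditions by the chain rule. Let $g$ be the $d_F$-vector on the left-hand side of Eq.~(\ref{eq_feature_optimality}), so that $g_i = \frac{\partial C_\theta}{\partial F_{\phi^\star}(x)_i}\frac{\partial L}{\partial C_\theta}$ with the implicit sum over the $C_\theta$ (output) indices. Because $b_i$ and $w_i$ enter $L$ only through the single coordinate $F_\phi(x)_i = a(z_i)$, $z_i := w_i^\top h(x)+b_i$, differentiation gives $0 = \partial L/\partial b_i = a'(z_i)\,g_i$ and $0 = \partial L/\partial w_i = a'(z_i)\,g_i\,h(x)$. Restriction~1 supplies $a'(z_i)\ne 0$ (and, should the last layer carry no bias, also $h(x)\ne 0$ via positivity of $a$), hence $g_i = 0$; ranging over $i=1,\dots,d_F$ and over $\mathcal{D}$ yields Eq.~(\ref{eq_feature_optimality}). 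Finally, since differentiation commutes with the expectation $\mathbb{E}_{\theta\sim\Theta_{\phi^\star}}$, running the same computation inside that expectation gives the averaged form $\mathbb{E}_\theta\,g = 0$ that the proposition following this lemma will actually use.

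The one genuinely delicate point is the logical status of the hypothesis ``$\phi^\star$ simultaneously minimizes the sample-wise losses'': this is a strong, interpolation-type assumption --- one $\phi^\star$ must minimize every per-sample term simultaneously --- rather than an automatic property of a minimizer of the summed or expected loss, and it must also place $\phi^\star$ at an interior, differentiable point so that a first-order condition is available at all. Granting that, everything else is mechanical: restriction~2 is what makes each feature coordinate controllable through its own last-layer parameters, and restriction~1 is exactly what is needed both to divide out the activation derivative and (through positivity) to guarantee $h(x)\ne 0$. I would finish with a one-line remark that when the last layer has biases the bias computation alone already settles the lemma, the weight computation being there only to cover the bias-free case.
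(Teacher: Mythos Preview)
Your proposal is correct and follows essentially the same route as the paper: apply first-order stationarity of each sample-wise loss with respect to the last-layer parameters, expand via the chain rule, and cancel the factor $a'(z_i)$ (together with the positivity of the penultimate activations) to isolate $g_i=0$. Your write-up is in fact more careful than the paper's---you separate the bias and weight computations, flag that the bias case alone suffices, and make explicit the interpolation-type strength of the simultaneous-minimization hypothesis---but the underlying argument is identical.
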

\begin{proof}
Let $\phi_\ell$ be the parameter set of the last weight layer in the feature extractor,
and let $x_\ell$ be its input.
Then, the $i$-th element of the feature layer,
which is fully-connected from the previous layer,
is given as
$F_\phi(i) = a(\sum_j\phi_\ell(i,j) x_\ell(j))$.
Let $z = \sum_j\phi_\ell(i,j) x_\ell(j)$.
Then, $\frac{\partial F_\phi(i)}{\partial \phi_\ell(i,j)} = \frac{\partial a(z)}{\partial z} x_\ell(j) \ne 0$,
since $\frac{\partial a(z)}{\partial {z}}\ne0$ and $x_\ell > 0$.
The inequality $\frac{\partial F_\phi(i)}{\partial \phi_\ell(i,j)}\ne 0$, 
the supposition $\left.\frac{\partial L}{\partial \phi_\ell(i,j)}\right|_{\phi=\phi^\star} = 0, 
\forall (x,t)\in\mathcal{D}$,
and the chain rule immediately leads Eq.~(\ref{eq_feature_optimality}).
\end{proof}

In the following discussion, we assume the conditions stated below hold.

\textbf{(C1)} A multi-layered feature extractor with two restriction is used:
1) The activation function satisfies Eq.~(\ref{eq_activation});
2) The last layer is fully-connected.

\textbf{(C2)} The target values are $t\in \{t_1, t_2\}$ for all samples.

\textbf{(C3)} A sample-wise loss function of the form
$\tilde{L}_{\phi,\theta}(x,t) = 
(C_\theta(F_\phi(x)) - t)^2$ 
is adopted.

\textbf{(C4)} A linear classifier 
$C_\theta(F_\phi(x)) = \bar{\theta}^\top F_\phi(x) + \theta^0$ is used.

\textbf{(C5)}
$\Theta_\phi = \mathcal{U}(\{\theta_{\phi, b} ; b = b_1, b_2, \cdots\})$, 
where
$\theta_b = \argmin_{\theta} \sum_{(x,t)\in b} 
\tilde{L}_{\phi,\theta}(x,t)
+ \frac{1}{2}\lambda \Vert \bar{\theta} \Vert_2^2$, 
and 
$b_1, b_2, \cdots$ are distinct batches, 
each of which comprises 
one sample from $t_1$ class and 
one sample from $t_2$ class.

%\textbf{(C5)} A classifier $\theta_b\sim\Theta_\phi$ satisfies
%$\theta_b = \argmin_{\theta} \sum_{(x,t)\in b} 
%\tilde{L}_{\phi,\theta}(x,t)
%+ \frac{1}{2}\lambda \Vert \bar{\theta} \Vert_2^2$, where
%$b$ represents a batch comprising one sample from each class.

\begin{proposition}
    Suppose that $\phi^\star$ simultaneously minimizes 
    the classifier-anonymized, sample-wise losses
    $\mathbb{E}_{\theta\sim{\Theta_\phi}} \tilde{L}_{\phi,\theta}(x,t)$
    in a class-separable fashion
    for all $(x,t)\in\mathcal{D}$.
    Then, samples from the same class share the same features; \ie,
    $F_{\phi^\star}(x) = F_{\phi^\star}(x'), \forall x,x'\in\mathcal{X}_c$, but
    samples from different classes do not; \ie,
    $F_{\phi^\star}(x) \neq F_{\phi^\star}(x'),~
    \forall x\in\mathcal{X}_c,~
    \forall x'\in\mathcal{X}_{c'\neq c}$.
    \label{proposition}
\end{proposition}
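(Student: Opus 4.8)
The plan is to run the chain: Lemma~\ref{lemma} gives per-coordinate feature optimality $\Rightarrow$ a linear system tying each $F_{\phi^\star}(x)$ to the second moment of the weak classifiers $\Rightarrow$ the weak-classifier ensemble collapses to a single classifier $\Rightarrow$ within-class features coincide while across-class features differ. First I would invoke Lemma~\ref{lemma} with the sample loss taken to be the classifier-anonymized loss $\mathbb{E}_{\theta\sim\Theta_{\phi^\star}}(C_\theta(F_\phi(x))-t)^2$, differentiating with $\Theta_\phi$ held fixed as in Algorithm~\ref{alg1}. Since each feature coordinate $F_\phi(i)=a(\sum_j\phi_\ell(i,j)x_\ell(j))$ is driven by its own last-layer weights with $\partial F_\phi(i)/\partial\phi_\ell(i,j)=a'(z_i)x_\ell(j)\neq0$, minimality forces, coordinate-wise and sample-wise, $\mathbb{E}_{\theta}\bigl[\tfrac{\partial C_\theta}{\partial F_{\phi^\star}}\tfrac{\partial\tilde{L}_{\phi^\star,\theta}(x,t)}{\partial C_\theta}\bigr]=0$. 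Inserting the linear classifier (C4) and the squared loss (C3) turns this into
\[
  \bigl(\mathbb{E}_{\theta}\,\bar\theta\bar\theta^\top\bigr)F_{\phi^\star}(x)=\mathbb{E}_{\theta}\,\bar\theta\,(t-\theta^0),\qquad\forall(x,t)\in\mathcal{D}.
\]

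Next I would take the SVD $[\bar\theta_{b_1},\bar\theta_{b_2},\dots]=USV^\top$ and decompose $F_{\phi^\star}(x)$ into its non-singular part $F^n_{\phi^\star}(x)$ --- the projection onto the column span of the $\bar\theta_b$'s --- and its orthogonal complement. On that subspace $\mathbb{E}_{\theta}\,\bar\theta^n\bar\theta^{n\top}$ is invertible, so the displayed system can be solved, giving fact (a): $F^n_{\phi^\star}(x)$ depends only on $t$ and on the collection $\{\theta_b\}$, not on the individual $x$. Separately, for a two-point batch $b=\{(x_+,t_1),(x_-,t_2)\}$ the ridge normal equations of (C5) yield $\theta^0_b=\tfrac12\sum_{(x,t)\in b}(t-\bar\theta_b^\top F_{\phi^\star}(x))$ and $(\Delta F_b\Delta F_b^\top+\lambda\mathbb{I})\bar\theta_b=(t_1-t_2)\Delta F_b$ with $\Delta F_b:=F_{\phi^\star}(x_+)-F_{\phi^\star}(x_-)$, hence fact (b): $\bar\theta_b=\frac{t_1-t_2}{\lambda+\|\Delta F_b\|^2}\Delta F_b$.

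The crux is reconciling (a) and (b). By (a) the non-singular part of $\Delta F_b$ is the same vector for every batch (it sees only the two labels), so by (b) all $\bar\theta^n_b$, hence all $\bar\theta_b$, coincide; therefore the column span of $[\bar\theta_{b_1},\dots]$ is one-dimensional --- and nonzero because the class-separability hypothesis excludes $\bar\theta_b\equiv0$ (equivalently $\Delta F_b\equiv0$). Consequently $\bar\theta_b$, then $\theta^0_b$, then the whole $\theta_b$, is independent of $b$: the ensemble collapses to a single $\theta^\dagger=(\bar\theta^\dagger,\theta^{0\dagger})$ with $\bar\theta^\dagger\neq0$, and feature optimality degenerates to $\bar\theta^{\dagger\top}F_{\phi^\star}(x)=t-\theta^{0\dagger}$ for all samples. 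Since $\bar\theta^\dagger\propto\Delta F_b$ while $\langle\Delta F_b,\bar\theta^\dagger\rangle$ is $b$-independent (again by (a)), each $\Delta F_b$ equals one fixed vector $\Delta F$. For same-class $x,x'$ I would pick any opposite-class $y$ and use the batches $(x,y)$ and $(x',y)$ to get $F_{\phi^\star}(x)-F_{\phi^\star}(y)=\Delta F=F_{\phi^\star}(x')-F_{\phi^\star}(y)$, so $F_{\phi^\star}(x)=F_{\phi^\star}(x')$; for different-class $x,x'$, $F_{\phi^\star}(x)-F_{\phi^\star}(x')=\pm\Delta F\neq0$ since $\Delta F=0$ would equalise all features and contradict class-separability.

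I expect the reconciliation of (a) and (b) to be the main obstacle: one must verify it is not circular --- fact (a) is obtained without using the closed form of $\theta_b$, and only afterwards is that form fed back in --- and pin down exactly where class-separability enters (solely to exclude $\bar\theta_b\equiv0$ and $\Delta F=0$). The remaining ingredients, namely the chain-rule/Lemma step, the SVD bookkeeping, and the ridge normal equations, are routine.
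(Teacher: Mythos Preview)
Your proposal is correct and follows essentially the same route as the paper: Lemma~\ref{lemma} $\Rightarrow$ the moment equation $(\mathbb{E}_\theta\bar\theta\bar\theta^\top)F_{\phi^\star}(x)=\mathbb{E}_\theta\bar\theta(t-\theta^0)$ $\Rightarrow$ SVD/non-singular projection giving fact (a), ridge normal equations giving fact (b), and their reconciliation forcing a rank-one, batch-independent $\bar\theta_b$. The only step you should make explicit is that $\bar\theta_b\propto\Delta F_b$ together with $\bar\theta^s_b=0$ forces $\Delta F^s_b=0$, so that $\|\Delta F_b\|=\|\Delta F^n_b\|$ is $b$-independent and your closed form indeed yields identical $\bar\theta^n_b$; with that one line your argument is complete and, in the final pairing step $F(x)-F(y)=\Delta F=F(x')-F(y)$, slightly cleaner than the paper's.
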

\begin{proof}
Lemma~\ref{lemma} 
about the implicit optimality of individual features 
with respect to sample-wise losses
yields,
\begin{equation}
    \mathbb{E}_\theta
    \frac{\partial C_\theta}{\partial F_{\phi^\star}}
    \frac{\partial \tilde{L}_{\phi^\star,\theta}(x,t)}{\partial C_\theta}
    = 0, ~\forall (x,t) \in \mathcal{D},
    \label{eq_BP}
\end{equation}
where $\mathbb{E}_\theta$ is a short-hand notation for
$\mathbb{E}_{\theta\sim\Theta_\phi}$.
By taking the partial derivatives in Eq.~(\ref{eq_BP}), one obtains,
\begin{equation}
    \left(\mathbb{E}_\theta
    \bar{\theta} \bar{\theta}^\top \right) F_{\phi^\star}(x) = 
    \mathbb{E}_\theta
    \bar{\theta} (t - \theta^0), ~\forall (x,t) \in \mathcal{D}.
    \label{eq_theta_and_F}
\end{equation}
The singular-value decomposition of the matrix consisting of column vectors 
$\bar{\theta}_b$ sampled from $\Theta_{\phi^\star}$ yields
$
    \left[ \bar{\theta}_{b_1}, \bar{\theta}_{b_2}, \cdots \right] = USV^\top ,
$
where diagonal elements of the positive diagonal matrix $S$
consists of the singular values aligned in decreasing order.
Then, $U^\top \bar{\theta}_b = [\bar{\theta}_b^{n \top}, 0,\cdots,0]^\top$,
where $\bar{\theta}_b^n$ is the non-singular components of $\bar{\theta}_b$.
Taking the non-singular part in Eq.~(\ref{eq_theta_and_F}),
\begin{equation}
    \left(\mathbb{E}_\theta
    \bar{\theta}^n \bar{\theta}^{n\top} \right) F_{\phi^\star}^n(x) = 
    \mathbb{E}_\theta
    \bar{\theta}^n (t - \theta^0), ~\forall (x,t) \in \mathcal{D}.
    \label{eq_thetan_and_F}
\end{equation}
Here, 
$U^\top F_{\phi^\star}(x) = \left[ F_{\phi^\star}^{n\top}(x), 
F_{\phi^\star}^{s\top}(x)\right]^\top$,
where $F_{\phi^\star}^n(x)$ is the corresponding 
non-singular part, meaning 
$F_{\phi^\star}^n(x)$ and $\bar{\theta}^n$ sharing the same dimension.
%and $F_{\phi^\star}^s(x)$ is the singular part.
The matrix
$\mathbb{E}_\theta \bar{\theta}^n \bar{\theta}^{n\top}$ is obviously invertible;
therefore, Eq.~(\ref{eq_thetan_and_F}) can be solved for $F_{\phi^\star}^n(x)$.
It then tells us 
that (a): $F_{\phi^\star}^n(x)$ depends only on $t$ and $\theta_b$'s.
On the other hand, the minimum-norm solution $\bar{\theta}_b$ satisfies,
\begin{equation}
    [(f_1 - f_2)(f_1 - f_2)^\top + \lambda \mathbb{I}] \bar{\theta}_b =
    (t_1 - t_2)(f_1 - f_2),
    \label{eq_L2solution}
\end{equation}
where $\mathbb{I}$ is the identity matrix and
$f_{1(2)} = F_{\phi^\star}$ with the target $t=t_1(t_2)$ as a short-hand notation.
Taking the non-singular part in Eq.~(\ref{eq_L2solution}),
the minimum-norm solution $\bar{\theta}_b^n$ satisfies (b):
%\begin{align}
%    &\left( \sum F^n_{\phi^\star} F^{n\top}_{\phi^\star}
%    - \frac{1}{2}\sum F^n_{\phi^\star} \sum F^{n\top}_{\phi^\star}
%    + \frac{\lambda}{2} \mathbb{I}_{d_F} \right) \bar{\theta}_b^n = \nonumber \\
%    &\sum t F^n_{\phi^\star}
%    - \frac{1}{2} \sum t \sum F^{n}_{\phi^\star},
%    \label{eq_L2solution}
%\end{align}
%where $\sum=\sum_{(x,t)\in b}$ and $\mathbb{I}_d$ is the $d \times d$ identity matrix.
\begin{equation}
    [(f^n_1 - f^n_2)(f^n_1 - f^n_2)^\top + \lambda \mathbb{I}] \bar{\theta}_b^n =
    (t_1 - t_2)(f^n_1 - f^n_2),
    \label{eq_L2solution_n}
\end{equation}
where superscript $n$ denotes the non-singular part.
Given the definition that 
$[\bar{\theta}^n_{b_1}, \bar{\theta}^n_{b_2}, \cdots]$ 
is full-rank,
statements (a) and (b) do not contradict only if 
\begin{equation}
    \exists v \in \mathbb{R}, ~\bar{\theta}^n_b = v, ~\forall b.
    \label{eq_theta_const}
\end{equation}
%\begin{equation}
%    \mathrm{rank}([ \bar{\theta}_{b_1}^n, \bar{\theta}_{b_2}^n, \cdots ]) = 1.
%    \label{eq_rank}
%\end{equation}
%It means at least that
%$\bar{\theta}_b^n \in \mathbb{R}$ and $F^{n}_{\phi^\star} \in \mathbb{R}$.
It is,
$\bar{\theta}^n_b$ is one-dimensional and constant for all $b$.
Then, statement (a) yields,
$F_{\phi^\star}^n(x) = F_{\phi^\star}^n(x'), \forall x, x' \in \mathcal{X}_c$.
Note that
$F_{\phi^\star}^n(x) \ne F_{\phi^\star}^n(x'), \forall x \in \mathcal{X}_c, \forall x'\in \mathcal{X}_{c'\ne c}$; otherwise $\phi^\star$ is not a class-separable solution.
Because $\theta^0_b = 1/2 \sum_{(x,t)\in b} (t - \bar{\theta}^{n\top} F_{\phi^\star}^n)$, 
$\theta^0_b$ must be the same for all $b$.
Since $\bar{\theta}_b = U [\bar{\theta}_b^{n\top}, 0, \cdots, 0]^\top$,
$\bar{\theta}_b$ must be the same for all $b$ also.
The fact that the minimum-norm solutions are the same for all combinations of
$t=t_1$ and $t=t_2$ data points tells that 
$F_{\phi^\star}(x) = F_{\phi^\star}(x'), \forall x, x' \in \mathcal{X}_c$ and
$F_{\phi^\star}(x) \ne F_{\phi^\star}(x'), \forall x \in \mathcal{X}_c, \forall x'\in \mathcal{X}_{c'\ne c}$.
\end{proof}

According to this proposition,
if the feature extractor has an enough representation
ability under certain conditions, 
all the input data of class $c$ are projected to a single point in the feature space in a class-separable way.
The left side of Fig.~\ref{fig_toy}~(b) visualizes 2D features of the toy data
optimized by FOCA under the conditions (C1)-(C5)\footnote{Here $\theta_{\phi, b}$ is the \textit{analytical} solution.}.
Features of the same class are confined in a vicinity, 
the size of which is
much smaller than the distance between the class centroids.
It is intriguing to observe such a ``point-like" distribution property,
even though we do not explicitly impose a thing like maximization of the
between-class scatter with respect to the sum of within-class scatters.
Although we have not succeeded in proving for multi-layered classifier cases,
the toy experiment still exhibits the point-like distribution 
property
as well; see the right side of Fig.~\ref{fig_toy}~(b).
%if Proposition~\ref{proposition} still holds for multi-layer classifier case,
%it means that we have a way to implicitly optimize the feature extractor
%so as to simplify the feature distribution to avoid the unwanted co-adaptation.

%%%%%%%%%%%%%%%%%%%%%%%%%%%%%%%%%%%%%%%%%%%%%%%%%%%%%%%%%%%%%%%%%%%%%%%%%%%%%%%%
\section{Experiment}
\label{four}

Let us first state our motivations for a series of experiments.
We saw in Section~\ref{three}
that the FOCA features obey 
a point-like distribution per class under the special conditions.
The question we try to answer in this section is,
\textit{Do the FOCA features form 
a point-like distribution or some similar distribution
under more realistic conditions?}
Let us suppose that features form a point-like distribution.
Then, following secondary classifier optimization 
with the feature extractor held fixed
should yield a similar decision boundary
no matter what subset of the entire dataset it learns, 
as long as all classes are covered.
We indeed confirmed
that high test performances are achieved by FOCA,
even when the secondary optimization uses smallest possible partial datasets;
namely, only one data from each class
(see Section~\ref{four-one}).
When FOCA is used,
the secondary optimization leads the classifier parameter vector
to almost the same point 
regardless of the size of the partial dataset it uses
(see Section~\ref{four-two}).
Lastly, 
low-dimensional analyses revealed that
FOCA features projected onto a hypersphere 
form a nearly a point-like distribution
in a class-separable fashion
(see Section~\ref{four-three}).

\textbf{Datasets.}
We use the CIFAR-10 dataset, 
a 10-class image classification dataset 
having $5\times 10^4$ training samples, and 
the CIFAR-100 dataset, 
a 100-class image classification dataset
having the same number of samples {\small\cite{Krizhevsky09learningmultiple}}.
Both datasets have similar properties except for 
the number of classes (10:100) and 
the number of samples per class (5000:500).
The idea is to see how these differences affect
feature distribution properties.

\textbf{Methods.}
In each experiment,
FOCA is compared with other methods below.
\textbf{Plain}: a vanilla mini-batch SGD training.
\textbf{Noisy} {\small\cite{Noisy}}: 
the same training rule as in Plain,
except that a zero-mean random Gaussian noise is added to 
each of the classifier parameters during training.
\textbf{Dropout} {\small\cite{Hinton2012}}: adopted only to the classifier part.
\textbf{Batch Normalization} {\small\cite{Ioffe2015}}: adopted to the entire architecture.

Dropout is claimed to reduce
co-adaptation {\small\cite{Hinton2012}}, 
though there is a counterpoint to this view {\small\cite{Helmbold2018}}.
FOCA, Dropout and Noisy share the same characteristics in a sense that 
the classifier's descriminative power is weakened and
the classifier ensemble is implicitly taken during training.
Apart from FOCA, Dropout and Noisy employ joint optimization, and
we are interested to see how this affects 
the robustness against inter-layer co-adaptation.
Batch Normalization is included in comparison based on a thought that
the way it propagates signals from one layer to the other 
may have some functionality mitigating inter-layer co-adaptation.

\textbf{Architecture.}
The architecture that we use for the primary optimization 
in each CIFAR-10 experiment
is the one introduced in {\small\cite{lee2016generalizing}}, 
except that we replaced the last two layers 
by three fully-connected (FC) layers of the form:
4096(feature dim.)-$\beta$-$\beta$-10, where $\beta=1024$ for Dropout and $\beta=128$ otherwise.
The architecture for the secondary optimization is 4096-128-128-10 for all methods.
%as this comparatively worked well in the validation for Plain.
The architecture for the CIFAR-100 primary optimizations
is VGG-16 {\small\cite{Simonyan2015}}, 
except that the last three FC layers are replaced by
512(feature dim.)-$\beta$-$\beta$-100, where $\beta=512$ for Dropout and $\beta=128$ otherwise.
The architecture for the secondary optimization is 512-128-128-100 for all methods.

\textbf{Training details.}
SGD with momentum is used in each baseline experiment.
In each FOCA experiment, 
the feature-extractor part uses SGD with momentum, and
the classifier part uses gradient descent with momentum.
In each training, we tested a couple of different initial learning rates
and chose the best-performing one in the validation.
A manual learning rate scheduling is adopted;
the learning rate is dropped by a fixed factor 1-3 times.
The weak classifiers are randomly initialized each time by zero-mean Gaussian distribution with standard deviation $0.1$ for both CIFAR-10 and -100.
Cross entropy loss with softmax normalization 
and ReLU activation 
{\small\cite{ReLU}}
are used in every case.
No data augmentation is adopted.
The batch size $b$ used in the weak-classifier training is
100 for the CIFAR-10 and 1000 for the CIFAR-100 experiments.
The number of updates to generate $\theta$ is 32 for the CIFAR-10 and 64 for the CIFAR-100 experiments.
Max-norm regularization {\small\cite{Srivastava2014}} is used for the FOCA training, 
to stabilize the training.
We found that the FOCA training can be made even more stable when
updating the feature-extractor parameters
$u$ times for 
a given weak classifier parameters.
We used this trick with $u=8$ in the CIFAR-100 experiments.

\subsection{Test Performances of Classifiers Trained on Partial Datasets}
\label{four-one}

% alpha/P x-axis
% alpha=0 SD
% total dist -> table
\begin{figure}[t]
    {\small
    \begin{center}
    \begin{tabular}{cc}
    \hspace{-2mm}\includegraphics[height=40mm]{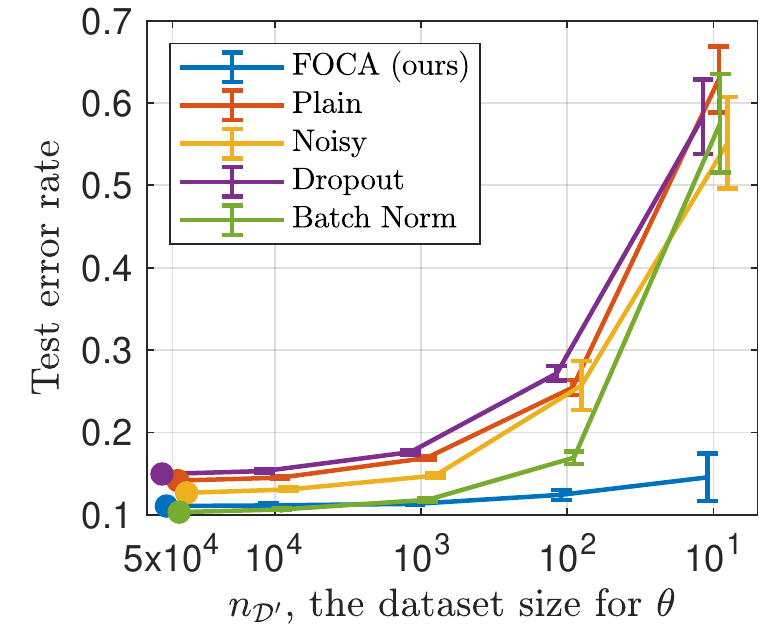} &
    \hspace{-3mm}\includegraphics[height=40mm]{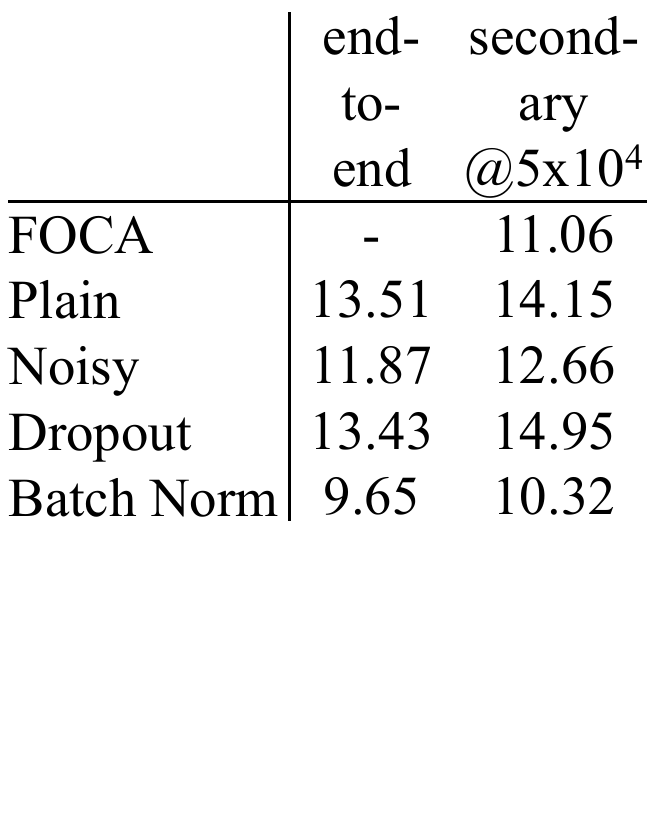}
    \end{tabular}
    (a) CIFAR-10
    \begin{tabular}{cc}
    \hspace{-2mm}\includegraphics[height=40mm]{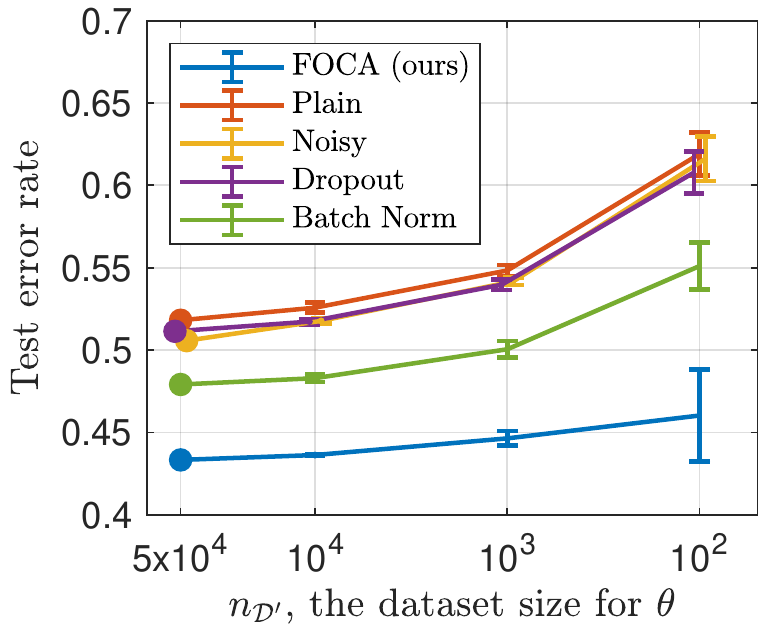} &
    \hspace{-3mm}\includegraphics[height=40mm]{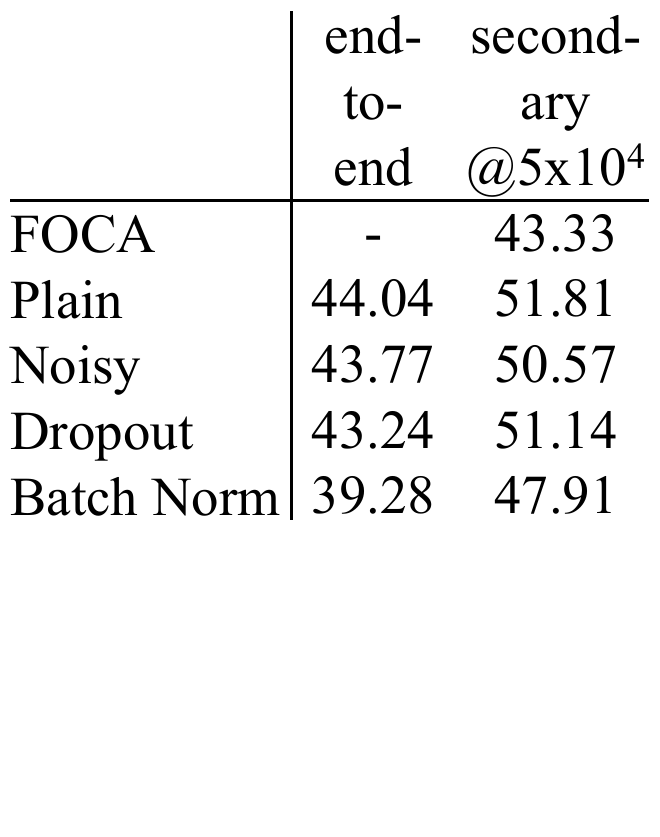}
    \end{tabular}
    (b) CIFAR-100
    \end{center}
    \caption{Test error rates of classifiers trained on partial datasets.
    For each method, partial datasets $\mathcal{D}'$
    are constructed $r(n_{\mathcal{D}'})$ times,
    where $r(5\times 10^4)=1$, $r(10^4)=5$, $r(10^3)=15$, $r(10^2)=50$, and
    $r(10)=150$. 
    The solid lines indicate the mean values and
    the error bars indicate $\pm1$ standard deviations of test error rates.
    The numbers in the right side indicate
    the test error rates (\%) of the end-to-end optimizations and 
    the secondary optimizations for $n_{\mathcal{D}'}=5\times 10^4$ 
    (also shown by the bullet points).
    }
    \label{fig_smalldataset}
    }
\end{figure}

\textbf{Motivation.}
We are interested in two aspects.
1) Is the test performance produced by the primary joint optimization 
reproducible by the secondary classifier optimization?
(FOCA is excluded here because it is not a joint optimization method.)
2) How does the test performance degrade when smaller datasets are learned 
in the secondary optimization?
Remember that a point-like distributed features are expected to demonstrate
little degradation.

\textbf{Experimental procedure.}
We trained feature extractors on the entire training dataset using 
FOCA and the other methods.
All methods except for FOCA employ the end-to-end learning scheme.
Then, we detach the feature extractors and the classifiers after learning.
Next, for all methods,
we fix the feature-extractor parameters and train
classifiers from scratch by the orthodox backpropagation
on \textit{reduced datasets} $\mathcal{D'}$ of size $n_{\mathcal{D}'}$.
For CIFAR-10 experiments, $n_{\mathcal{D}'}=5\times 10^4, 10^4, 10^3, 10^2, 10$
($10$ is the smallest possible dataset size).
For CIFAR-100 experiments, $n_{\mathcal{D}'}=5\times 10^4, 10^4, 10^3, 10^2$
($10^2$ is the smallest possible).

Figure~\ref{fig_smalldataset} now shows the test error rates vs. $n_{\mathcal{D}'}$.

\textbf{Cases of $n_{\mathcal{D}'}=n_\mathcal{D}$.}
For all end-to-end optimization methods tested here,
the test performances after the secondary optimizations at $n_{\mathcal{D}'}=5\times 10^4$ 
happen to be worse than corresponding end-to-end optimization results
(see the right side of Fig.~\ref{fig_smalldataset}).
This is one of the indications that inter-layer co-adaptation 
occurs to some degree in each of these joint optimization methods.

\textbf{Cases of $n_{\mathcal{D}'}<n_\mathcal{D}$.}
For CIFAR-10, when $n_{\mathcal{D}'} \le 10^3$,
FOCA outperforms the other methods.
To our surprise, the average test error rate of FOCA at $n_{\mathcal{D}'}=10$ is
14.45\%, 
which is indeed better than the test error rates of 
Dropout (14.95\%) at $n_{\mathcal{D}'}=5\times 10^4$ 
Plain (14.50\%) at $n_{\mathcal{D}'}=10^4$,
Noisy (14.75\%) at $n_{\mathcal{D}'}=10^3$, and
Bach Normalization (16.93\%) at $n_{\mathcal{D}'}=10^2$.
%This gives an evidence 
%that there is a case where
%a classifier that only trained features of 
%a very small dataset, consisting of one sample from each class,
%can obtain the generalization ability of a network that is end-to-end trained with
%a $5000$ times larger dataset.
For CIFAR-100, FOCA outperforms the other methods at any $n_{\mathcal{D}'}$.
The average error rate of FOCA at $n_{\mathcal{D}'}=10^2$ is 46.03\%,
which is indeed better than the test error rate of all 
the other methods at
$n_{\mathcal{D}'}=n_\mathcal{D}=5\times 10^4$.
We think that a high test performance of FOCA 
for $n_{\mathcal{D}'} \ll n_\mathcal{D}$
is one indication of relatively simple form of feature distribution.

%%%%%%%%%%%%%%%%%%%%%%%%%%%%%%%%%%%%%%%%%%%%%%%%%%%%%%%%%%%%%%%%%%%%%%%%%%%%%%%%
\subsection{Approximate Geodesic Distances between Solutions}
\label{four-two}

\begin{figure*}[t]
    \centering
    {\small
    \begin{tabular}{ccccc}
    \hspace{-2.0mm}\includegraphics[height=35.4mm]{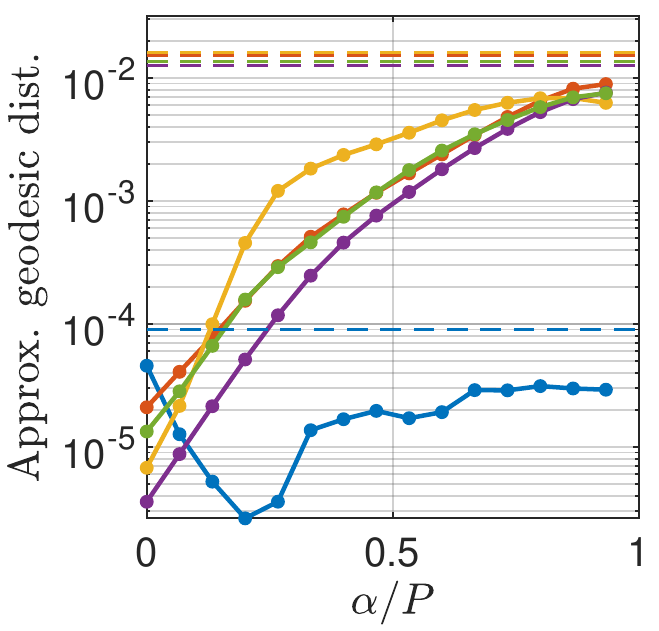} &
    \hspace{-3.6mm}\includegraphics[height=35.4mm]{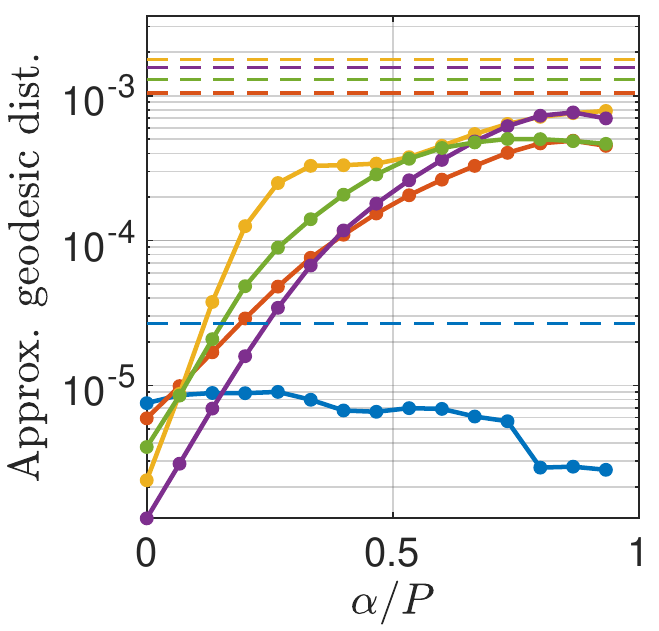} &
    \hspace{-3.6mm}\includegraphics[height=35.4mm]{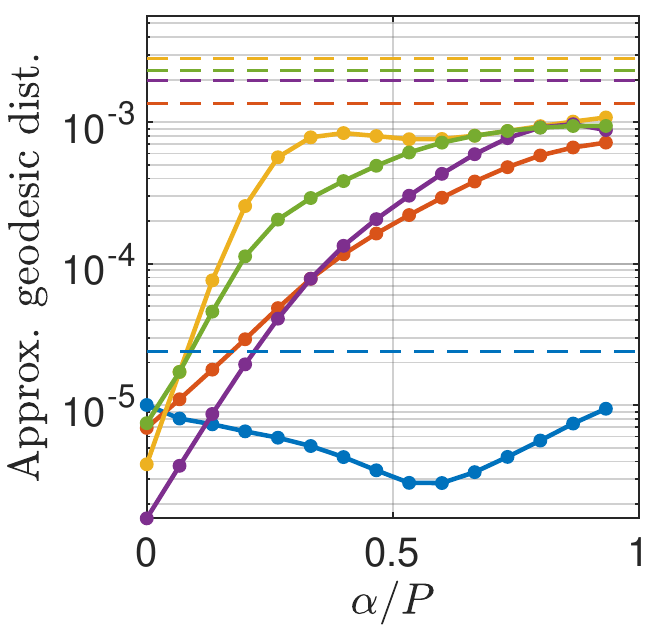} &
    \hspace{-3.6mm}\includegraphics[height=35.4mm]{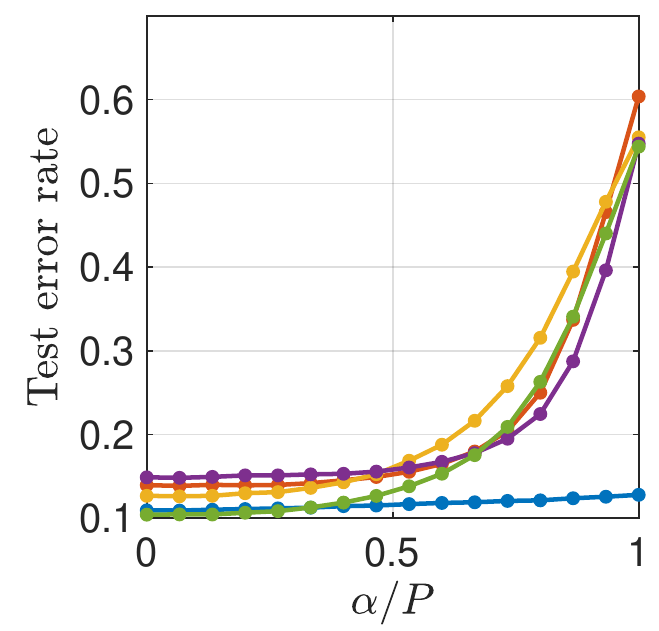} &
    \hspace{-5.0mm}\includegraphics[width=22mm]{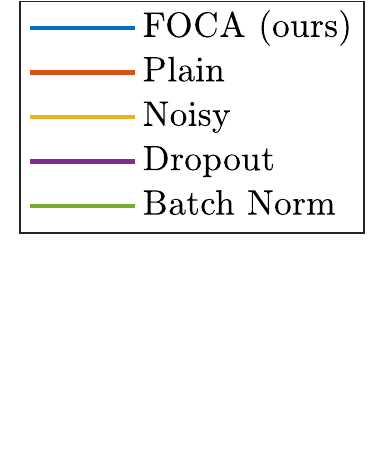} \\
    (a) CIFAR-10, 1st layer & (b) CIFAR-10, 2nd layer & 
    (c) CIFAR-10, 3rd layer & (d) CIFAR-10, error rate & \\
    \hspace{-2.0mm}\includegraphics[height=35.4mm]{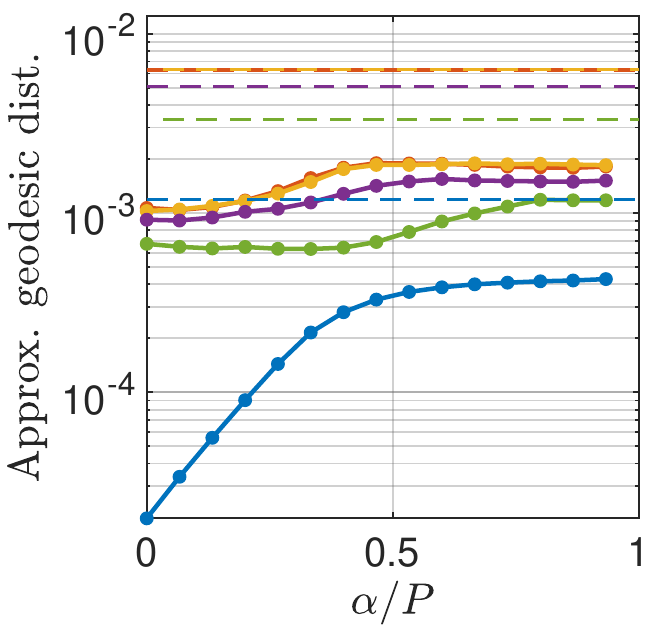} &
    \hspace{-3.6mm}\includegraphics[height=35.4mm]{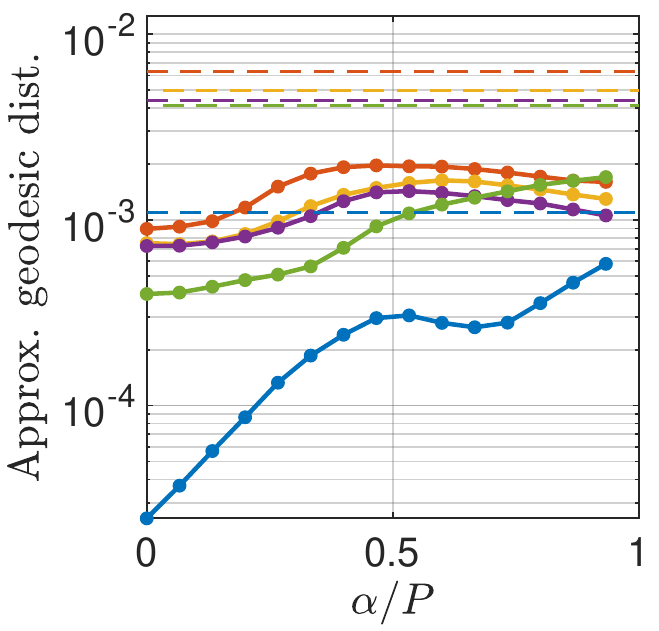} &
    \hspace{-3.6mm}\includegraphics[height=35.4mm]{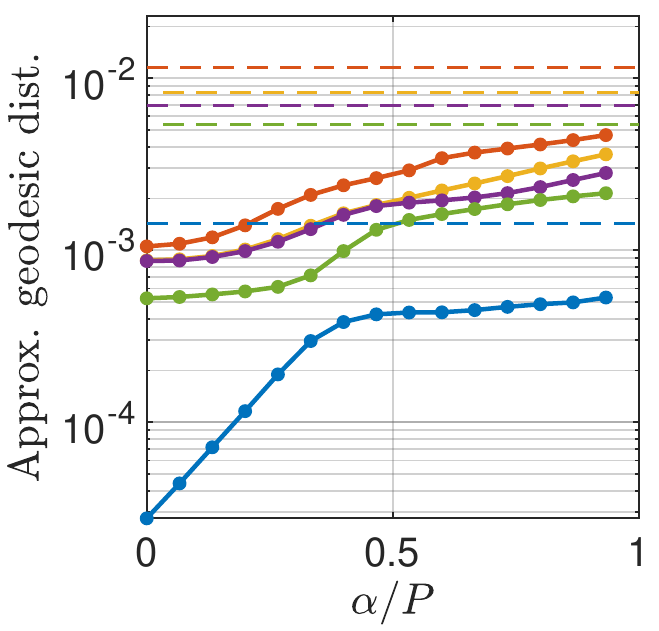} &
    \hspace{-3.6mm}\includegraphics[height=35.4mm]{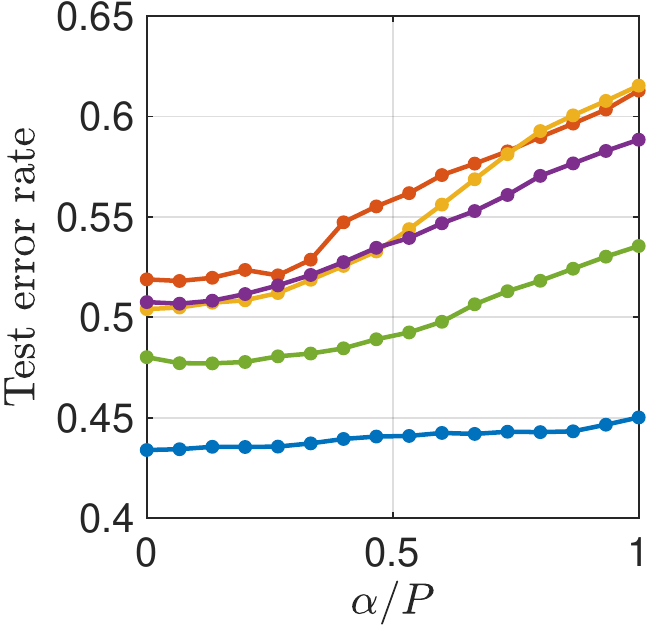} & \\
    %\hspace{-3.5mm}\includegraphics[width=22mm]{fig/GeodesicDist_lgd_crop.pdf} \\
    (e) CIFAR-100, 1st layer & (f) CIFAR-100, 2nd layer & 
    (g) CIFAR-100, 3rd layer & (h) CIFAR-100, error rate & 
    \end{tabular}
    }
    \caption{The approximate geodesic distances (a-c, e-g) 
    between $\theta^{LD}$ and $\theta^{SD}$
    and test error rates at $\theta^\alpha$ (d, h).
    In (a-c, e-g), the solid lines indicate segment-wise distances
    $d(\theta^\alpha,\theta^{\alpha+1}),\alpha\in\{0,\cdots,P-1\}$,
    and the dashed lines indicate total distances $d(\theta^{LD}, \theta^{SD})$.}
    \label{fig_geodesic}
\end{figure*}

\textbf{Motivation.}
We just saw that
classifiers trained with the largest possible dataset 
($n_{\mathcal{D}'}=n_{\mathcal{D}}$) and
classifiers trained with the smallest possible partial dataset 
($n_{\mathcal{D}'}=10$ for CIFAR-10, $n_{\mathcal{D}'}=100$ for CIFAR-100) 
exhibit similar test performances when the FOCA features are used.
Let us call the former the large-dataset solution $\theta^{LD}$
and the latter the small-dataset solution $\theta^{SD}$.
The above observation drove us to investigate 
distances between $\theta^{LD}$ and $\theta^{SD}$.
The distance should be small 
when the features form a point-like distribution per class,
and we expect FOCA has this characteristics.
We employ the \textit{approximate geodesic distance} here 
to take changes in the loss landscape into account.
If $\theta^{LD}$ and $\theta^{SD}$ are virtually the ``same" point with FOCA,
we further expect that the test error rate is almost unchanged
at any intermediate point between $\theta^{LD}$ and $\theta^{SD}$.
Since two neural networks having the same architecture and different parameters can
produce the same output for an arbitrary input {\small\cite{Watanabe2009}},
we initialized networks with the same set of random numbers
in the experiments conducted in this subsection.

\textbf{Experimental procedure.}
After optimizing a feature extractor on the full training dataset,
$\theta^{LD}$ is optimized with features of the full dataset of size 
$n_{\mathcal{D}'}=n_\mathcal{D}=5\times 10^4$, and 
$\theta^{SD}$ is optimized with features of a smallest possible partial dataset;
$n_{\mathcal{D}'}=10$ for CIFAR-10 and 
$n_{\mathcal{D}'}=100$ for CIFAR-100.
To quantify the separation between $\theta_{LD}$ and $\theta_{SD}$,
we partition the straight line connecting $\theta_{LD}$ and $\theta_{SD}$
into $P$ line segments 
of equal lengths in the parameter space; \ie,
\begin{equation}
    \theta^\alpha = \frac{\alpha \theta_{SD}+(P-\alpha)\theta_{LD}}{P},~
    \alpha = 0, 1, \cdots, P.
    \label{eq_15}
\end{equation}
We define in this article 
an approximate geodesic distance $d(\theta^{LD}, \theta^{SD})$ between 
$\theta^{LD}$ and $\theta^{SD}$
as 
\begin{equation}
    d(\theta^{LD}, \theta^{SD}) = \left[ \sum_{\alpha=0}^{P-1}
    d(\theta^\alpha, \theta^{\alpha+1})^2 \right]^\frac{1}{2}.
\end{equation}
Here, $d(\theta^\alpha, \theta^{\alpha+1})$ is the distance between 
$\theta^\alpha$ and $\theta^{\alpha+1}$
with respect to the Fisher information metric 
$\mathcal{I}^\alpha$ evaluated 
at $\theta^\alpha$,
\begin{align}
    &d(\theta^\alpha, \theta^{\alpha+1})^2 =
    (\theta^{\alpha+1} - \theta^\alpha)^\top \mathcal{I}^\alpha
    (\theta^{\alpha+1} - \theta^\alpha), \\
    &\mathcal{I}^\alpha = \mathbb{E}_{(x,t)\in\tilde{\mathcal{D}}}
    \left. \left(\frac{\partial L_{\phi^\star,\theta}(x,t)}{\partial \theta}\right) 
    \left(\frac{\partial L_{\phi^\star,\theta}(x,t)}{\partial \theta}\right)^\top
    \right|_{\theta=\theta^\alpha}, \nonumber
\end{align}
where $\tilde{\mathcal{D}}$ is either $\mathcal{D}$ or
a subset of $\mathcal{D}$ for ease of computation, and
$L_{\phi^\star, \theta}(x,t)$ is a short-hand notation of 
$L(C_\theta(F_{\phi^\star}(x)), t)$.
To compute a genuine geodesic distance, one needs to compute
the sum of Fisher-metric distances between pairs of 
infinitesimally separated points
along the curve that minimizes the squared sum.
This is computationally infeasible;
we instead approximate the curve by the straight line, as explained.
In the experiment, we let $\tilde{\mathcal{D}}$ be
a randomly chosen subset consisting of 5\% of the entire training samples.
We set $P=15$.
We evaluated $d(\theta^{LD}, \theta^{SD})$ layer by layer.

Figure~\ref{fig_geodesic} now shows the approximated geodesic distances 
and test error rates at $\theta^\alpha$.

\textbf{Approximate geodesic distances.}
For CIFAR-10,
FOCA exhibits some orders-of-magnitude, say 40-180 times, smaller
distances $d(\theta^{LD}, \theta^{SD})$ than the other methods.
For CIFAR-100,
FOCA exhibits 3-9 times smaller distances than the other methods.
We guess the reason why the differences are moderate for the CIFAR-100 cases
is that point-like distribution is harder to obtain for CIFAR-100.
%the number of classes is 10 times larger while the number of samples %is 10 times smaller 
%than CIFAR-10.
Nevertheless, FOCA exhibit smallest approximate geodesic distances in all cases, and
we think this is an implicit evidence that the distribution of the FOCA features 
is simple enough so that the discriminative function generated with $n_{\mathcal{D}'}=n_{\mathcal{D}}$ 
is virtually reproducible 
when $n_{\mathcal{D}'} \ll n_\mathcal{D}$.
%This is another indication that inter-layer co-adaptation is 
%either absent or at least inconspicuous.

%evidence that
%the excessively complex 
%feature distribution is absent
%when FOCA is used
%so that...

\textbf{Test error rates at $\theta^\alpha$.}
For FOCA, test error rates are almost constant at 
all $\theta^\alpha$.
Together with the small $d(\theta^{LD}, \theta^{SD})$,
two points $\theta^{LD}$ and $\theta^{SD}$ could be 
viewed as virtually the same point, when FOCA is used.
For other methods, 
test error rates increase more rapidly
toward $\theta^{SD}=\theta^{15}$.

%%%%%%%%%%%%%%%%%%%%%%%%%%%%%%%%%%%%%%%%%%%%%%%%%%%%%%%%%%%%%%%%%%%%%%%%%%%%%%%%
\subsection{Low-Dimensional Properties}
\label{four-three}

\begin{figure*}[tb]
%\if0 % % % % % % % % % % % % % % % % % % % % % % % % % % % % % % % % % % % % % % % % 
    \centering
    {\small
    \begin{tabular}{ccccc}
    \hspace{-2.0mm}\includegraphics[height=28.0mm]{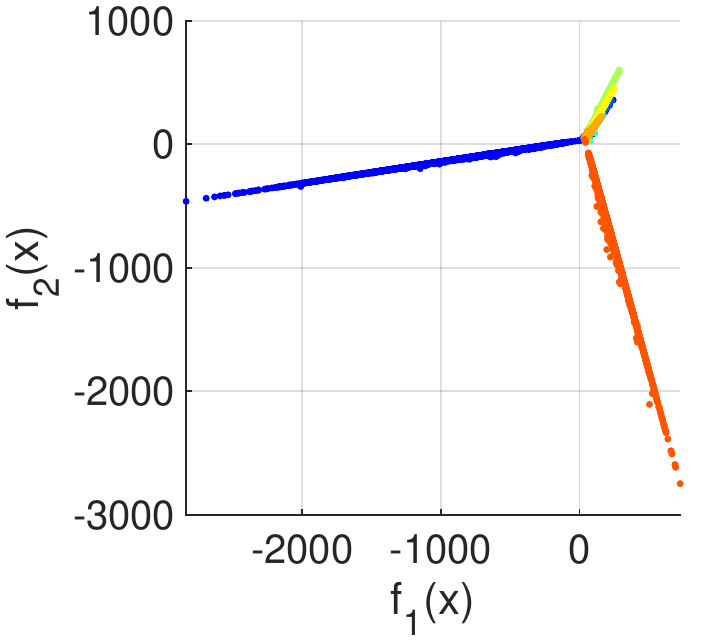} &
    \hspace{-4.0mm}\includegraphics[height=28.0mm]{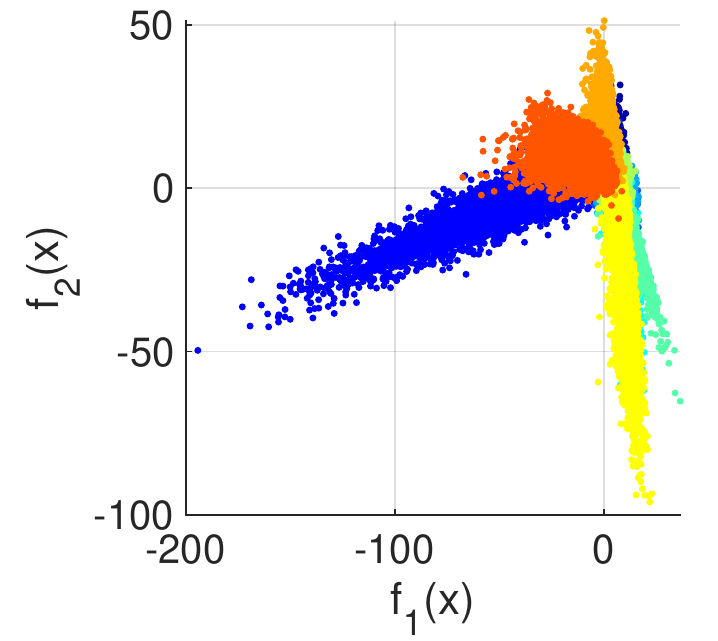} &
    \hspace{-4.0mm}\includegraphics[height=28.0mm]{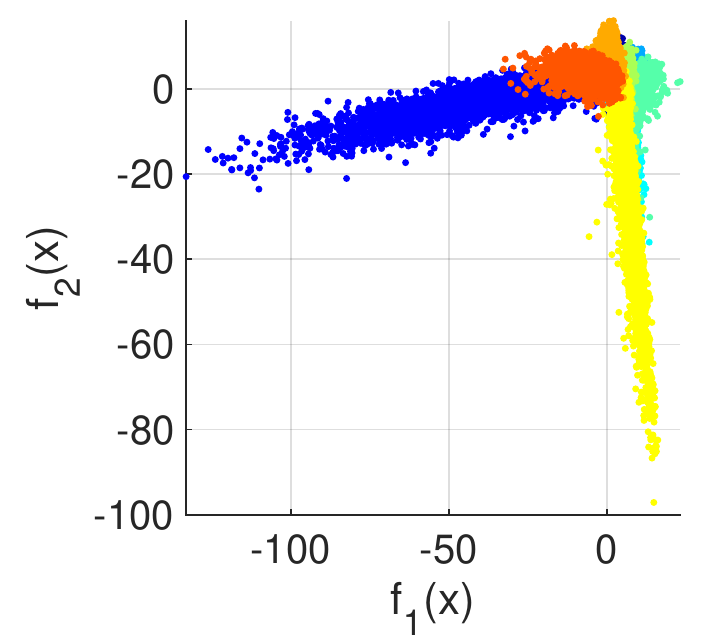} &
    \hspace{-4.0mm}\includegraphics[height=28.0mm]{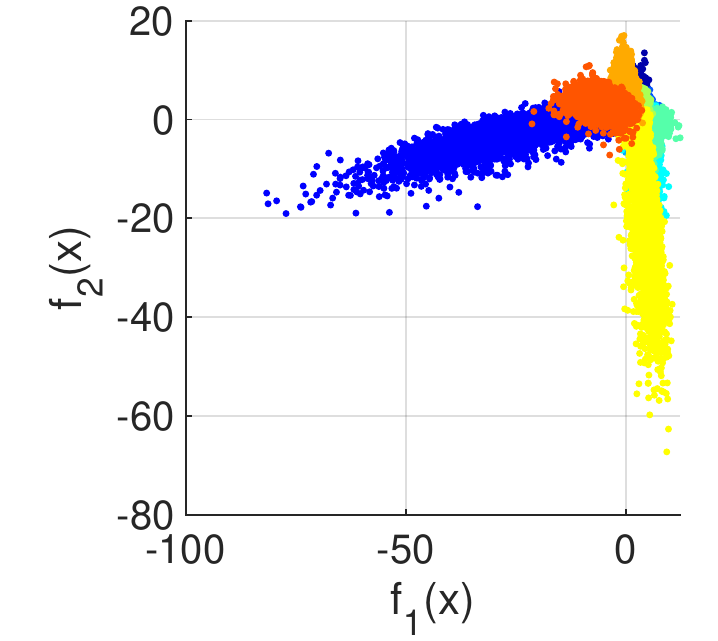} &
    \hspace{-4.0mm}\includegraphics[height=28.0mm]{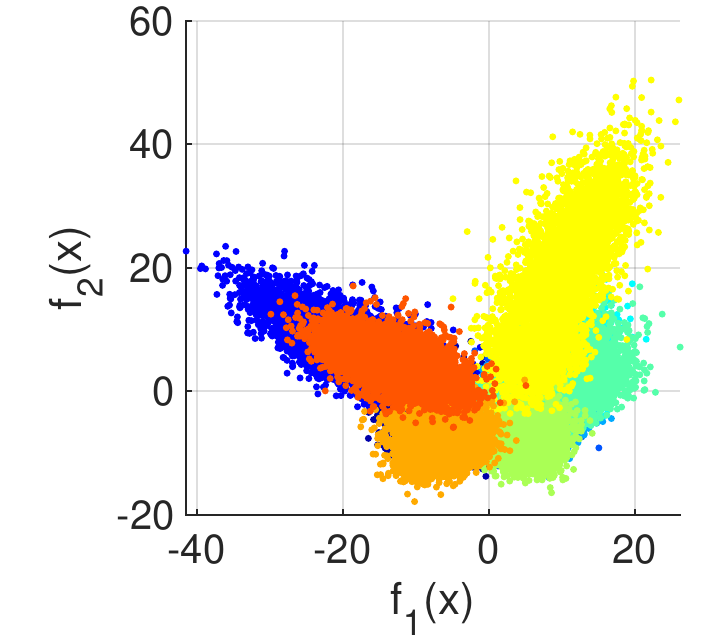} \\
    \multicolumn{5}{c}{(a) Training-data features projected by the PCA bases.} \\
    \hspace{-2.0mm}\includegraphics[height=28.0mm]{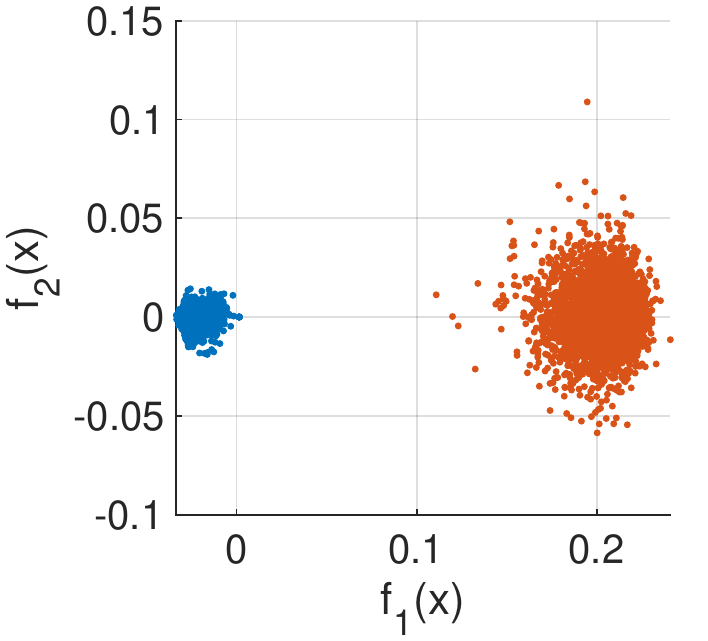} &
    \hspace{-4.0mm}\includegraphics[height=28.0mm]{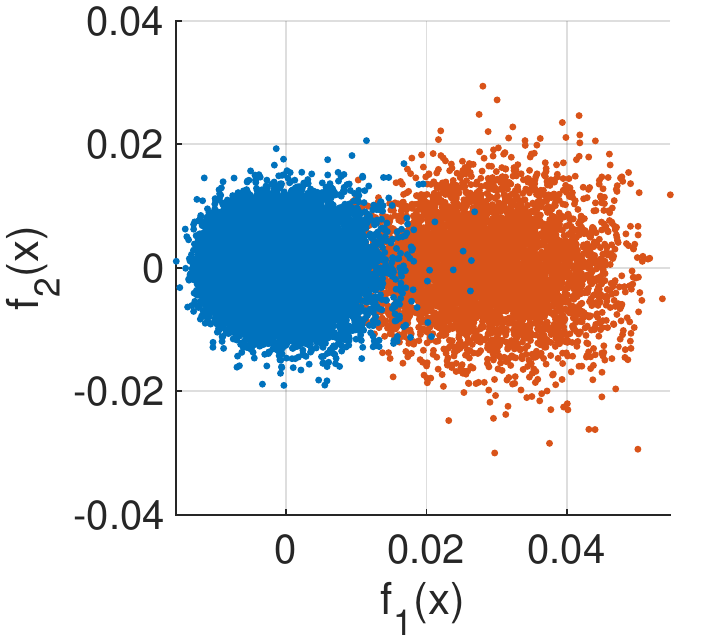} &
    \hspace{-4.0mm}\includegraphics[height=28.0mm]{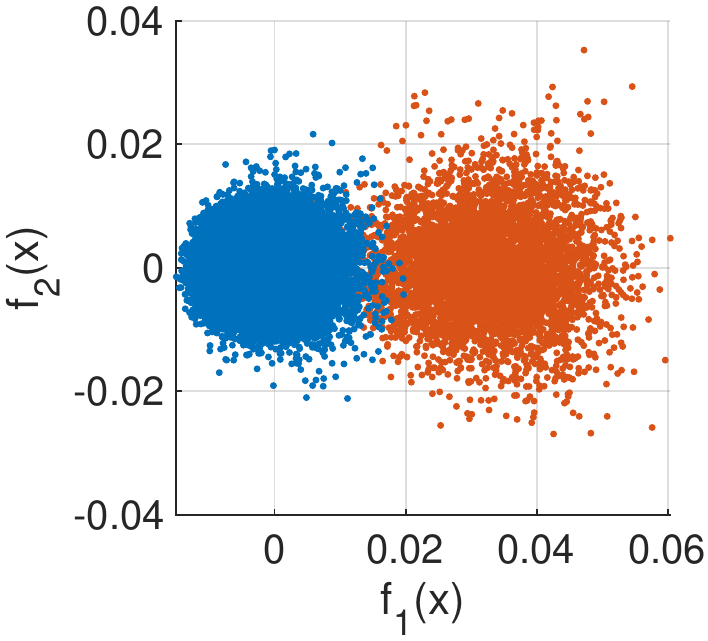} &
    \hspace{-4.0mm}\includegraphics[height=28.0mm]{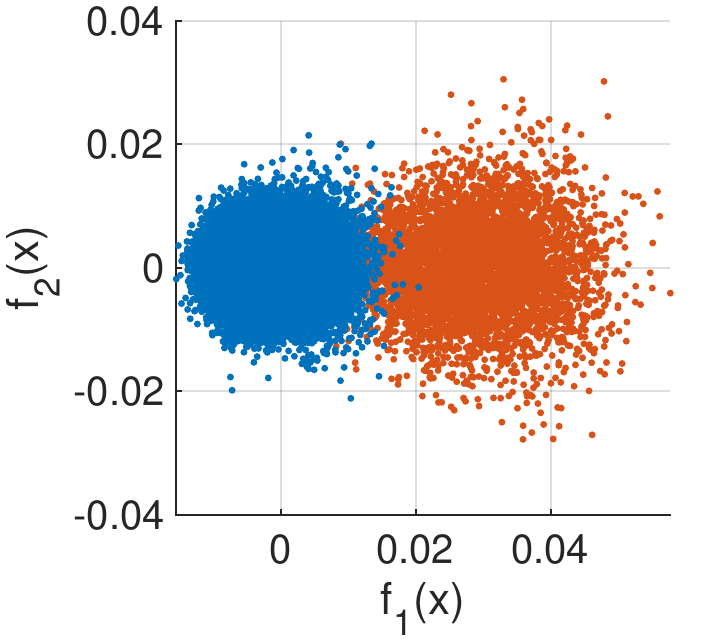} &
    \hspace{-4.0mm}\includegraphics[height=28.0mm]{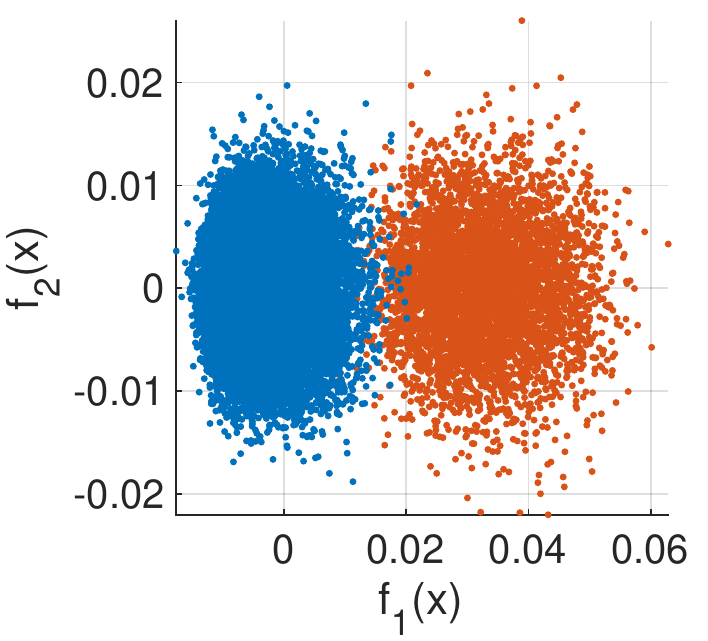} \\
    \multicolumn{5}{c}{(b) Training-data features first normalized 
    to unit lengths then projected by the LDA bases,} \\
    \multicolumn{5}{c}{constructed on the class-1, normalized features vs. 
    the rest of the normalized features.} \\
    \hspace{-2.0mm}\includegraphics[height=28.0mm]{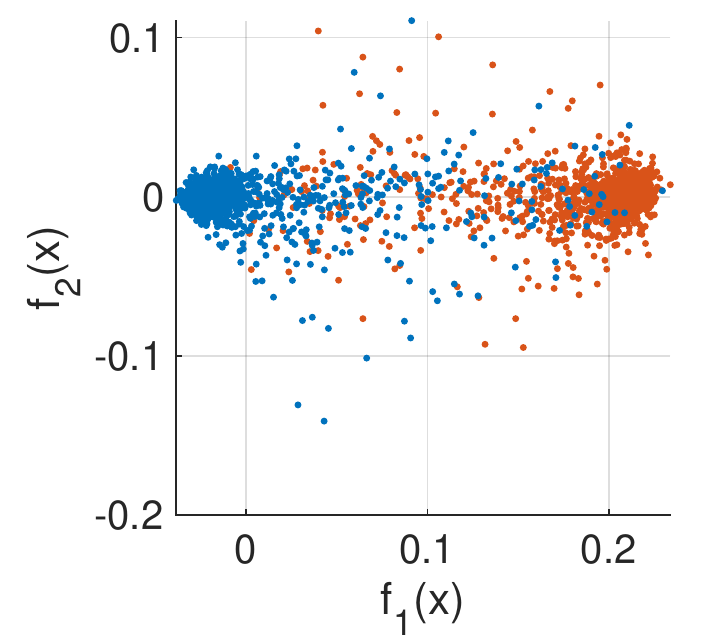} &
    \hspace{-4.0mm}\includegraphics[height=28.0mm]{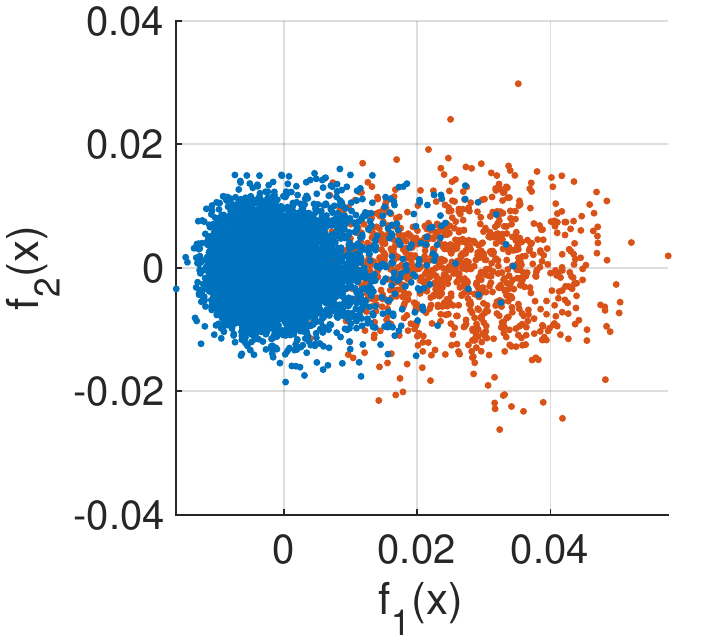} &
    \hspace{-4.0mm}\includegraphics[height=28.0mm]{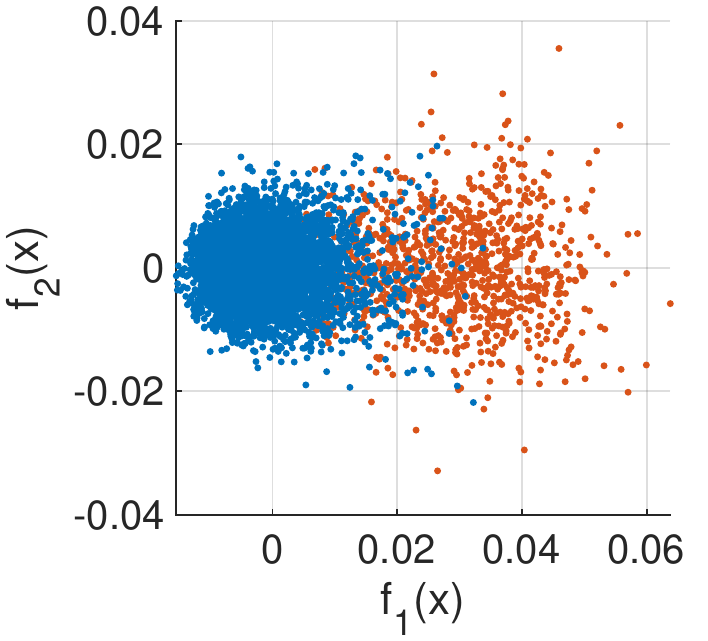} &
    \hspace{-4.0mm}\includegraphics[height=28.0mm]{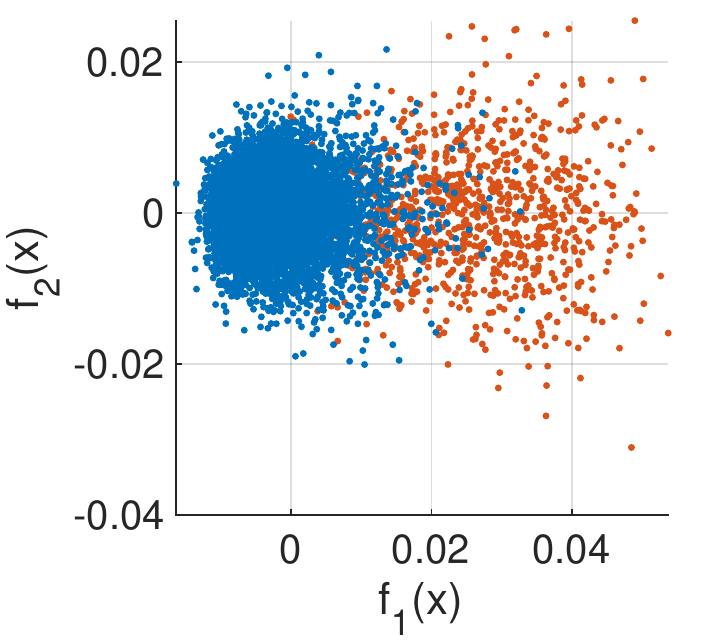} &
    \hspace{-4.0mm}\includegraphics[height=28.0mm]{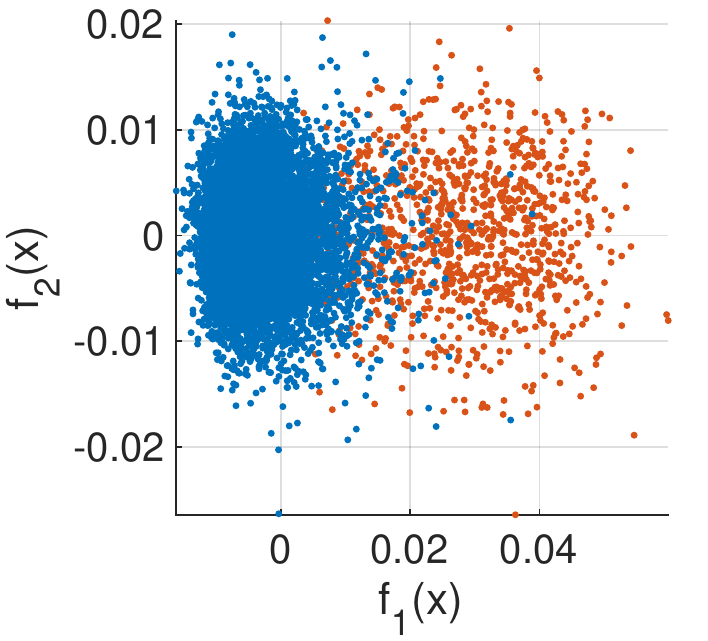} \\
    \multicolumn{5}{c}{(c) Test-data features first normalized to unit lengths 
    then projected by the same LDA bases used in (b).}
    \end{tabular}
    }
%\fi % % % % % % % % % % % % % % % % % % % % % % % % % % % % % % % % % % % % % % % % 
\caption{Two-dimensional visualization of 4096-dimensional, CIFAR-10 features.
Methods are (from the left): FOCA (ours), Plain, Noisy, Dropout, and Bach Normalization.
Colors indicate true classes.
}
\label{fig_scatter_pca}
\end{figure*}
%    \hspace{-2.0mm}\includegraphics[height=30.0mm]{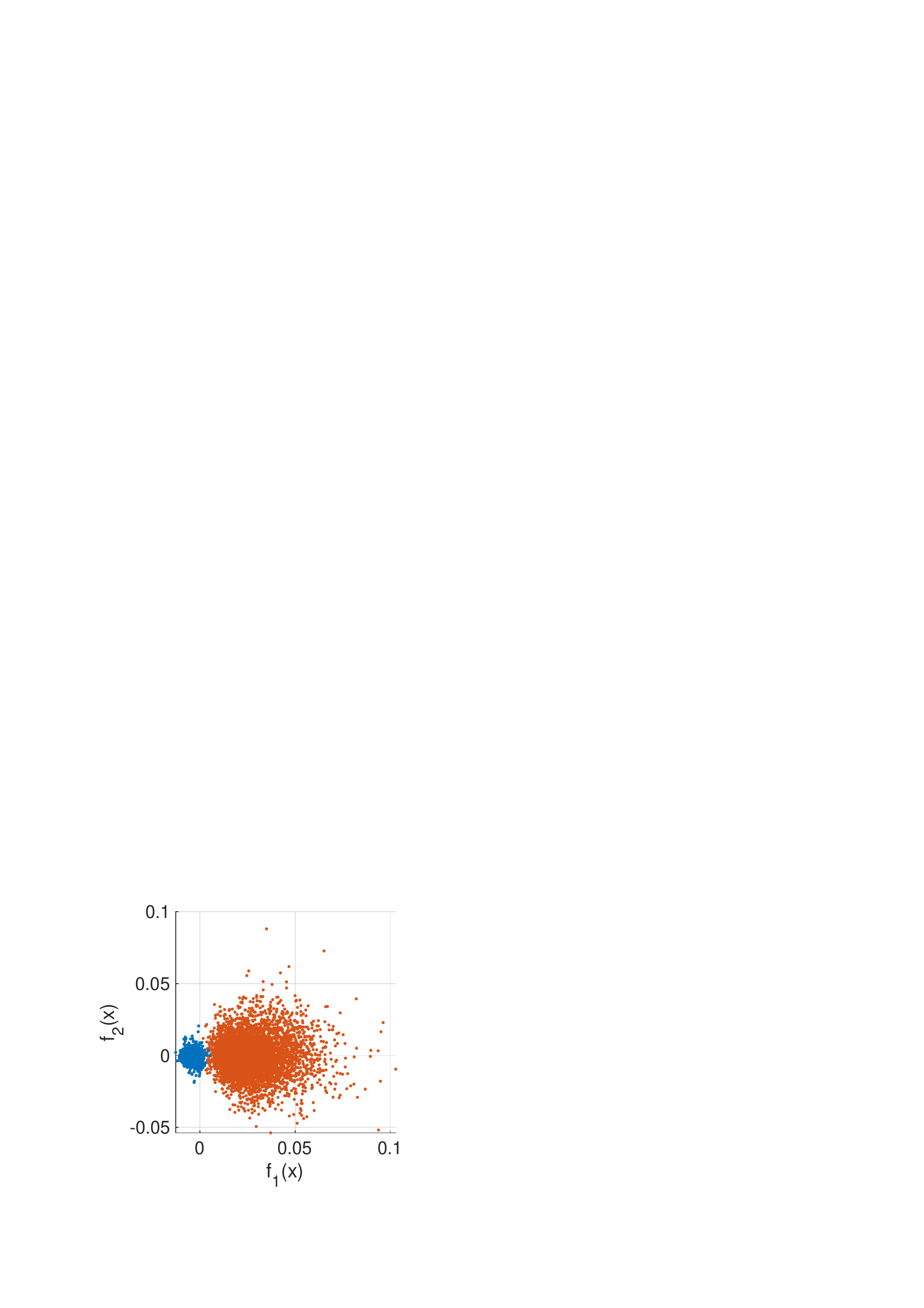} &
%    \hspace{-5.0mm}\includegraphics[height=30.0mm]{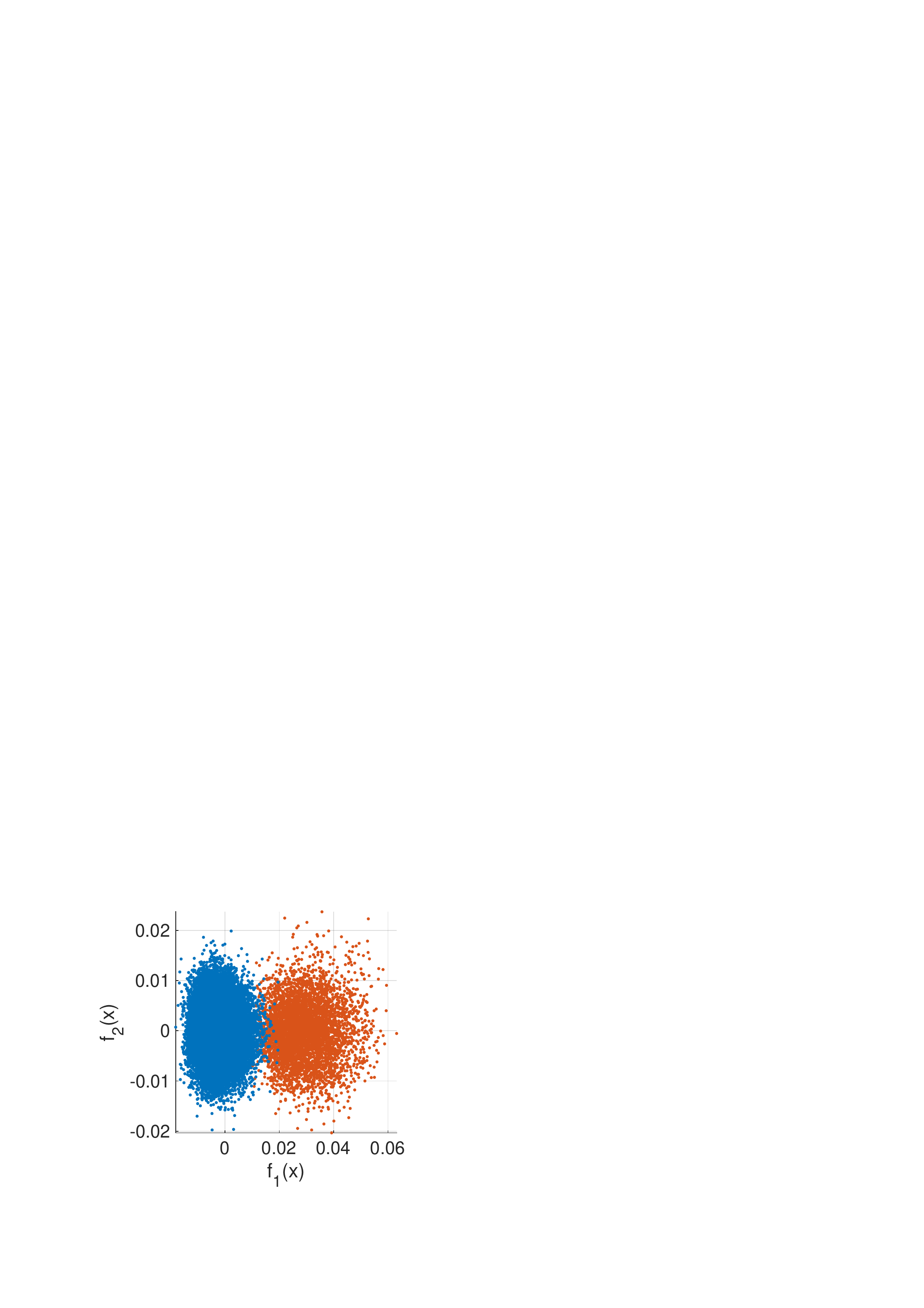} &
%    \hspace{-5.0mm}\includegraphics[height=30.0mm]{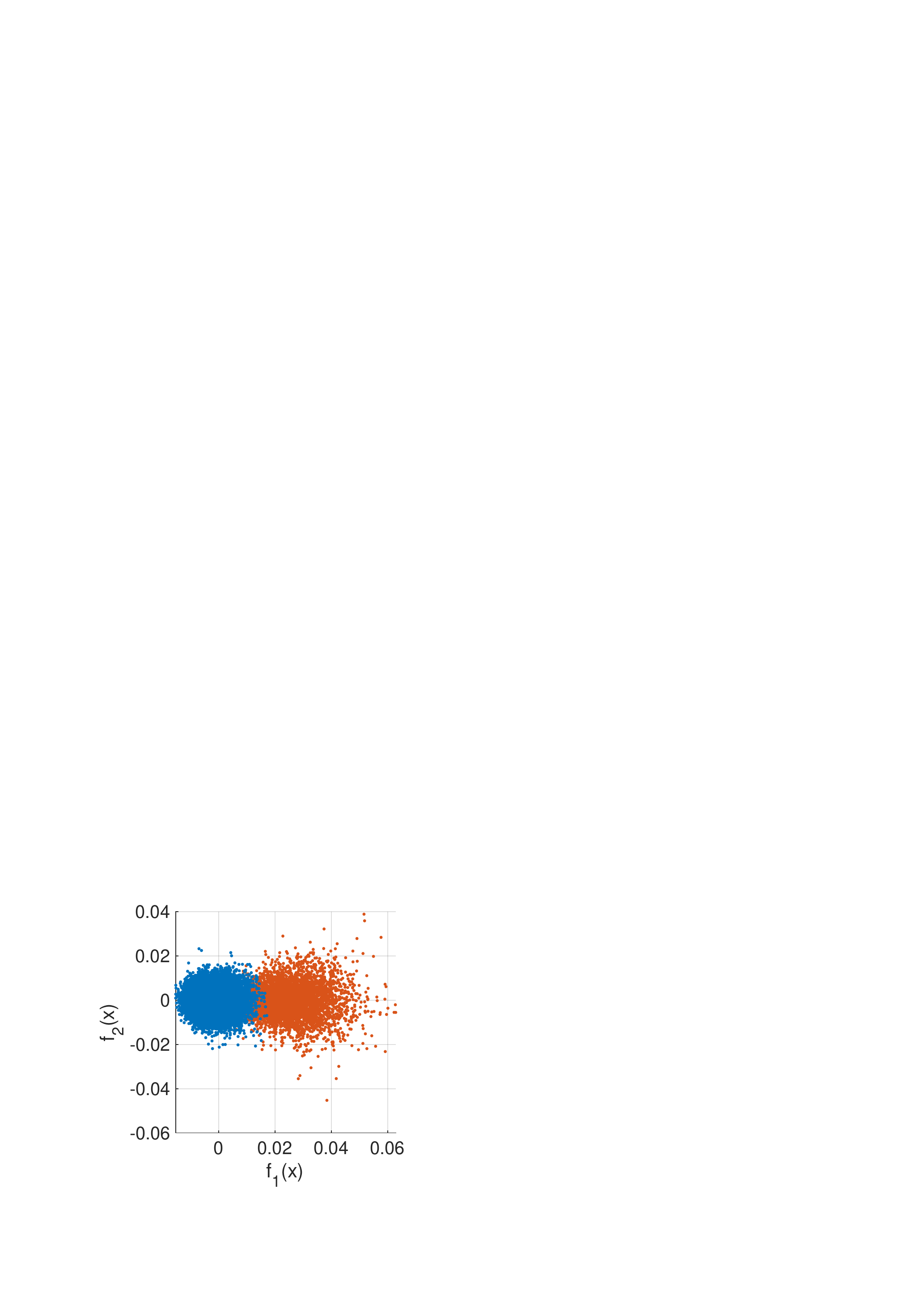} &
%    \hspace{-5.0mm}\includegraphics[height=30.0mm]{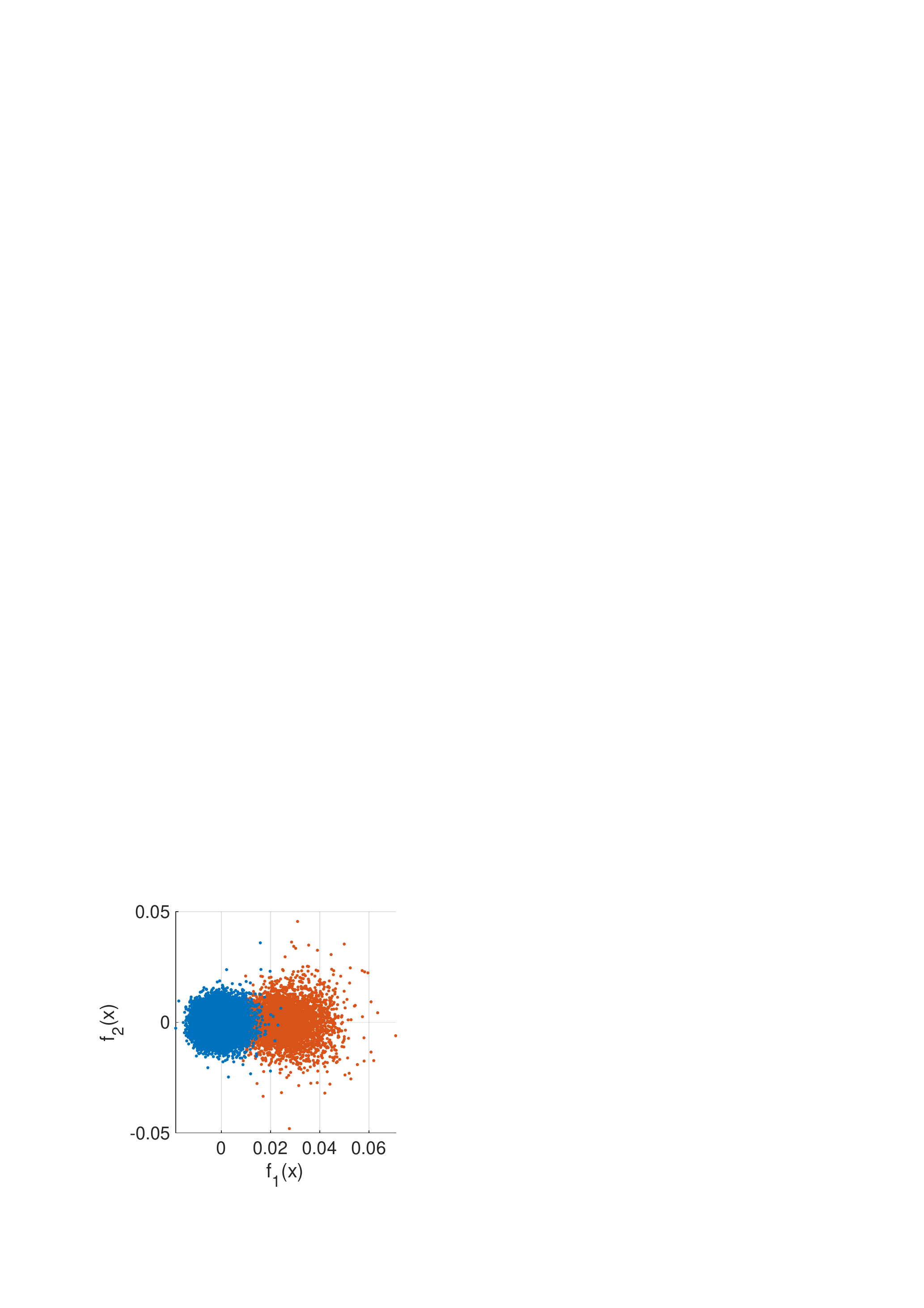} &
%    \hspace{-5.0mm}\includegraphics[height=30.0mm]{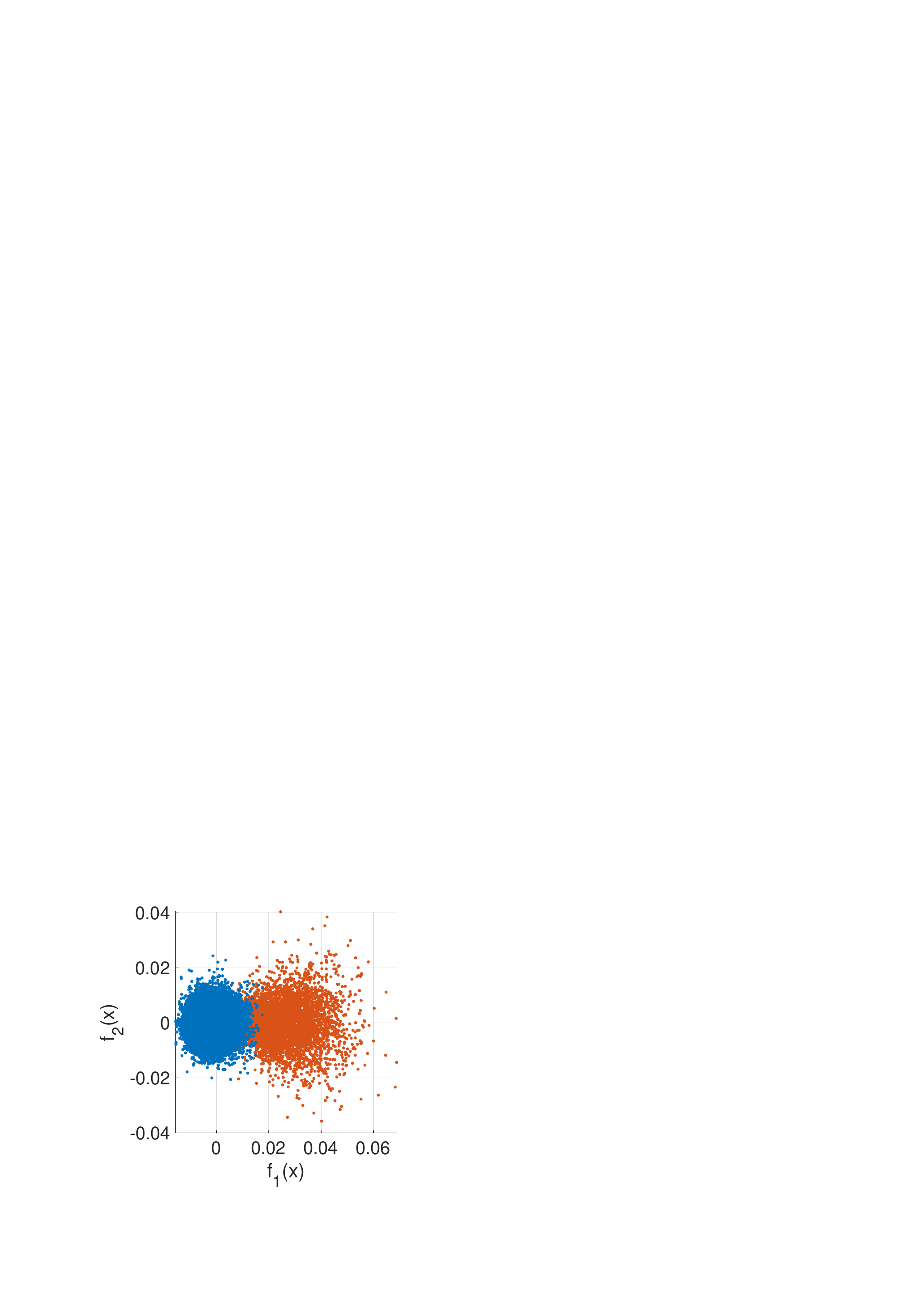} \\
%    \multicolumn{5}{c}{(b) Training-data features projected by LDA bases,
%    constructed on the class-1 features vs the rest of the features.} \\
%    \multicolumn{5}{c}{Orange: the class-1 features, blue: the rest of the features.} \\

\textbf{Motivation.}
We now use classical component analyses to 
clarify the low-dimensional structure of the FOCA features.

In this subsection we only show the CIFAR-10 results, 
because the CIFAR-100 results are qualitatively similar.

\textbf{Principal Component Analysis (PCA).}
Figure~\ref{fig_scatter_pca}~(a) shows
scatter plots of training-data features projected
by 2D bases of the PCA that is applied to
all features.
For a given class, 
the projected FOCA features
look nearly one-dimensional, 
not a point-like, per class.
This one-dimensional characteristics is probably due to
the use of softmax normalization at the last layer, 
though we have no proof so far.
In contrast, the projected features of the other methods 
clearly span two dimensions,
roughly confined in an ellipse-like region,
for a given class.
%This observation is somehow in agreement 
%with a toy-data experiment that uses cross-entropy loss 
%with softmax activation.

\begin{comment}
\textbf{Linear Discriminant Analyses (LDA).}
Figure~\ref{fig_scatter_pca}~(b) shows 
scatter plots of training-data features projected by 2D bases produced by 
one-class-vs-rest LDA.
Only class-1-vs-rest results are shown because no significant differences are observed
when replacing class-1 by another class.
It is clear from the figure that none of these 2D features are linearly separable, and 
that no significant differences are found among the methods.
The largest generalized eigenvalues of the between-class scatter with respect to
the sum of within-class scatter, shown in the middle column of Table~\ref{table_LDA_eigenvalues},
express a kind of linear separability of given features.
Among these values, the largest value is given by Batch Normalization,
but is in fact not significantly larger than the other.
\end{comment}

\begin{table}[h]
\caption{
The results of the class-1-vs-rest LDA with normalization.
}
\label{table_LDA_eigenvalues}
    \centering
    {\small
    \begin{tabular}{|l|c|c|}
    \hline Method & Eigenvalue & Test error rate \\ \hline
    FOCA (ours) & \textbf{247.28} & \textbf{2.01\%} \\
    Plain & 5.74 & 2.71\% \\
    Noisy & 7.49 & 2.86\% \\
    Dropout & 5.81 & 2.78\% \\
    Bach Norm & 7.28 & 2.43\% \\ \hline
    \end{tabular}
    %\begin{tabular}{|l|c|c|}
    %\hline Method & Unnormalized LDA & Normalized LDA \\ \hline
    %FOCA (ours) & 4.79 & \textbf{247.28} \\
    %Bach Norm & \textbf{5.82} & 7.28 \\
    %Dropout & 4.10 & 5.81 \\
    %Plain & 3.79 & 5.74 \\
    %Noisy & 4.55 & 7.49 \\ \hline
    %\end{tabular}
    }
\end{table}

\textbf{Linear Discriminant Analyses (LDA) with normalization.}
Next, we examine the LDA on features normalized to unit lengths.
Normalization is taken based on the observation that
the 2D features in Fig.~\ref{fig_scatter_pca}~(a)
are distributed mostly along 
radial direction about a point close to the origin.
Figure~\ref{fig_scatter_pca}~(b) shows 
the 2D features 
that are normalized and projected by the LDA bases described above.
Only class-1-vs-rest results are shown because no significant differences are observed
when replacing class-1 by another class.
Here, we can observe a remarkable differences;
the projected FOCA features are linearly separable with a fairly large margin,
compared to the characteristic scales of the class-1 distribution 
or of the rest-of-the-class distribution.
The form of the feature distribution is close to point-like per class,
somewhat similar to the observation in the toy experiment 
shown in Fig.~\ref{fig_toy}~(b).
%except that the projected features 
%look a little more spread in the real-dataset experiment.
In contrast, the other methods exhibit linearly non-separable feature distributions.

\textbf{Linear separability by the LDA with normalization.}
The generalized eigenvalue computed in the LDA discussed above are 
given in Table~\ref{table_LDA_eigenvalues}.
The values are the largest ratios of the between-class scatters to the within-class scatters after linear projections.
FOCA exhibits orders of magnitude larger generalized eigenvalue
than other methods.
This 
supports the high level of linear separability of the FOCA features.
Figure~\ref{fig_scatter_pca}~(c) shows the normalized features of test data, projected 
by the same LDA bases,
to see the generalizability.
Table~\ref{table_LDA_eigenvalues} also shows the smallest possible
test error rates (class-1 vs. rest)
by setting a threshold 
along the principal axis.
FOCA yields the lowest test error rate.

%%%%%%%%%%%%%%%%%%%%%%%%%%%%%%%%%%%%%%%%%%%%%%%%%%%%%%%%%%%%%%%%%%%%%%%%%%%%%%%%
\section{Conclusion}
\label{five}

A na\"ive joint optimization of 
a feature extractor and a classifier in a neural network
often brings cases where
both sets of parameters are tied in a so complex way that
the classifier is irreplaceable 
without degrading the test performance.
We introduced a method called
Feature-extractor Optimization through Classifier Anonymization (FOCA),
that is designed to break unwanted inter-layer co-adaptation.
FOCA produces a feature extractor that 
does not explicitly adapt to a particular classifier.
We gave a mathematical proposition that guarantees a simple form of 
feature distribution under special conditions;
indeed, features form a point-like distribution in a class-separable way.
Different kinds of real-dataset experiments under more general conditions 
provide supportive evidences.
%A simple projection of the high-dimensional features exhibits
%a well-separated, nearly point-like distribution property.

\bibliography{ICML2019_Sato_etal}
\bibliographystyle{icml2019}

\end{document}